\def\eqref#1{equation~\ref{#1}}
\def\1{\bm{1}}
\DeclareMathAlphabet{\mathsfit}{\encodingdefault}{\sfdefault}{m}{sl}
\SetMathAlphabet{\mathsfit}{bold}{\encodingdefault}{\sfdefault}{bx}{n}
\newtheorem{definition}{Definition}
\newtheorem{observation}{Observation}
\newtheorem{lemma}{Proposition}
\newcommand\haic[1][]{human-AI collaboration}
\newcommand\ccf[1][]{collaborative characteristic function}
\newcommand\clp[1][]{collaborative learning path}
\title{A Dynamic Model of Performative Human-ML Collaboration: Theory and Empirical Evidence}
\author{%
  Tom S\"uhr\\
  MPI for Intelligent Systems, T\"ubingen\\
  T\"ubingen AI Center\\
  \texttt{tom.suehr@tuebingen.mpg.de} \\
  \And
  Samira Samadi  \\
  MPI for Intelligent Systems, T\"ubingen \\
  T\"ubingen AI Center\\
  \texttt{samira.samadi@tuebingen.mpg.de} \\
  \AND
  Chiara Farronato \\
  Harvard Business School\\
  Harvard University, CEPR, NBER \\
  \texttt{cfarronato@hbs.edu} \\
}
\begin{document}

\maketitle

\begin{abstract}
Machine learning (ML) models are increasingly used in various applications, from recommendation systems in e-commerce to diagnosis prediction in healthcare. 
In this paper, we present a novel dynamic framework for thinking about the deployment of ML models in a performative, human-ML collaborative system. In our framework, the introduction of ML recommendations changes the data-generating process of human decisions, which are only a proxy to the ground truth and which are then used to train future versions of the model. We show that this dynamic process in principle can converge to different stable points, i.e. where the ML model and the Human+ML system have the same performance. Some of these stable points are suboptimal with respect to the actual ground truth. As a proof of concept, we conduct an empirical user study with 1,408 participants. In the study, humans solve instances of the knapsack problem with the help of machine learning predictions of varying performance. This is an ideal setting because we can identify the actual ground truth, and evaluate the performance of human decisions supported by ML recommendations. We find that for many levels of ML performance, humans can improve upon the ML predictions. We also find that the improvement could be even higher if humans rationally followed the ML recommendations. Finally, we test whether monetary incentives can increase the quality of human decisions, but we fail to find any positive effect. Using our empirical data to approximate our collaborative system suggests that the learning process would dynamically reach an equilibrium performance that is around 92\% of the maximum knapsack value. Our results have practical implications for the deployment of ML models in contexts where human decisions may deviate from the indisputable ground truth. 
\end{abstract}

\section{Introduction}
Human-ML collaboration is increasingly used in various applications, from content moderation in social media \citep{lai2022human} to predicting diagnoses in healthcare \citep{jacobs2021machine,dvijotham2023enhancing} and making hiring decisions in human resources \citep{peng2022investigations}. Companies that implement human-ML collaborative systems face three crucial challenges: 1) ML models learn from past human decisions, which are often only an approximation to the ground truth (noisy labels); 2) ML models are rolled out to help future human decisions, affecting the data-generating process of human-ML collaboration that then influences future updates to ML models (performative predictions as in \citet{perdomo2020performative}); and 3) the quality of the human-ML collaborative prediction of the ground truth may change as a function of incentives and other human factors. These challenges create a dynamic learning process.
Without access to the ground truth, it is often difficult to know whether the learning process will reach an equilibrium state with a good approximation of the ground truth, if it is interrupted at a sub-optimal level, or if it does not reach a stable state at all.

For intuition, we can focus on the decision of a healthcare company to develop and deploy an ML model to predict medical diagnoses from patient visits.
The problem is made difficult by the fact that a doctor's diagnoses can be wrong, and it is often too costly or time-consuming to identify the indisputable ground truth---i.e., the underlying true diagnosis of a patient---so the company typically uses all diagnoses to train their ML model, without distinction between good or bad diagnoses. In addition, the company typically evaluates the algorithm's performance based on its ability to match those same doctor diagnoses, potentially replicating their mistakes. The dynamic deployment of updates to ML models that support doctor diagnoses could lead to a downward spiral of human+ML performance if the company deploys a bad model and the bad model adversely affects doctor decisions. Or, it can lead to continuous improvement until it reaches a stable point that is a good approximation to the indisputable ground truth. Without (potentially costly) efforts to measure the ground truth, the company has no way of distinguishing between downward spirals or continuous improvements.

This raises a multitude of empirical questions regarding the governing mechanisms of this dynamic system. How do humans improve on ML predictions of different quality levels, and do financial incentives matter? Will the dynamic learning process converge to a good equilibrium even without the company knowing the actual ground truth labels? %How do we design an experiment to test such a system in an empirical context?
\vspace{-3mm}
\paragraph{Contributions.} 
In this paper, we present a novel framework for thinking about ML deployment strategies in a performative, human-AI collaborative system. We present a theoretical framework to identify conditions under which ML deployment strategies converge to stable points that are a good approximation to the ground truth, and conditions under which there are downward spirals away from the ground truth. Our theory introduces the notion of a \ccf, which maps algorithmic performance to the performance of human decisions supported by ML predictions. As a proof of concept for our theory, we provide an empirical study in which humans solve knapsack problems with the help of machine learning predictions. We conducted a user study with 1,408 participants, each of whom solved 10 knapsack problems. %Knapsack problems are particularly suited to explore our questions because we can generate problem instances at negligible cost and perfectly identify the underlying ground truth (the optimal solution). 
The empirical exercise allows us to evaluate 1) the quality of human decisions supported by ML models of varying performance, 2) how these human decisions compare to a best-case scenario, and 3) how these human decisions are affected by monetary incentives. With some additional assumptions, we can map these data to our theoretical \ccf.

%We highlight both theoretical and empirical contributions. Our theoretical framework introduces the \emph{\ccf} as the function mapping the performance of ML models with respect to the ground truth to the performance of humans using those models when making decisions (human+ML). 
%We define \emph{utility} the performance of decisions (made by ML alone, human alone, or human+ML) with respect to the ground truth. We note that utility cannot be easily quantified for many practical applications. Finally, we introduce the notion of \emph{\clp s}, each of which characterizes a possible dynamic deployment strategy. We show conditions under which this dynamic system theoretically reaches a stable point of utility for the firm.
%
We highlight three main empirical results. First, we show that humans tend to improve upon the ML recommendation for many levels of ML performance. %Yet, low-quality ML models can make humans perform worse than if they had no access to ML recommendations. 
Second, humans sometimes submit solutions that are worse than the ML recommendation, despite the fact that with knapsack, it is fairly easy for them to compare their solution to the ML suggestion and pick the best of the two. Third, humans do not respond to financial incentives for performance. The empirical data can be used to approximate theoretical \ccf s. The results suggest that, at least in our context, \ccf s are invariant to monetary incentives. Additionally, the fact that humans sometimes submit solutions that are worse than the provided ML recommendation implies that there remains a gap between the \ccf~based on human labels and the \ccf~constructed by selecting the maximum between the human and ML solution. 
%the deployment strategy of ML models with initial performance between 72\% and 92\% will likely converge to a performance that is around 92\% of the value of the optimal knapsack solution. Two empirical findings constrain the system from converging to an even higher performance. 

Our results have practical implications for the deployment of ML models when humans are influenced by those models but their decisions deviate from an unknown ground truth. First, performance metrics of ML models can be misleading when the learning objective is based on comparisons against human decisions and those decisions can be wrong. Companies should thus exert efforts to assess the quality of human decisions and take that into account when training ML models. For example, in the medical setting, human diagnoses should be first verified or confirmed by external experts, or patients should be followed up to confirm the validity of initial diagnoses. At a minimum, ML models should be trained on subsets of data for which there is enough confidence that the decisions are correct.  Second, our work highlights the strategic importance of deploying ML models that allow for convergence to a stable point with higher utility than humans alone. Such convergence is not guaranteed and, as argued above, difficult to assess. Third, our work calls for the need to adopt a dynamic approach when deploying algorithms that interact with human decisions, and those interactions are used for future model building.
\section{Related Work}
There has been a growing body of work investigating various forms of \textbf{human-ML collaboration}. From learning-to-defer systems, where a model defers prediction tasks to humans if its own uncertainty is too high \citep{cortes2016learning,charusaie2022sample,mozannar2023should}, to ML-assisted decision making where humans may or may not consult ML predictions to make a decision \citep{mozannar2024effective, dvijotham2023enhancing, jacobs2021machine}. Several alternative decision mechanisms have also been explored \citep{steyvers2022bayesian, mozannar2024reading}. The application areas range from programming \citep{dakhel2023github,mozannar2024show}, to healthcare \citep{jacobs2021machine, dvijotham2023enhancing} and business consulting \citep{dell2023navigating}. Related work also investigates factors influencing human-ML collaboration, such as explanations of ML predictions \citep{ vasconcelos2023explanations}, monetary incentives \citep{agarwal2023combining}, fairness constraints \citep{suhr2021does}, and humans' adaptability to model changes \citep{bansal2019updates}. In this work, for the first time to the best of our knowledge, we theoretically examine
the human+ML interaction from a dynamic perspective, where ML models learn from human decisions that are 1) the result of previous human+ML collaboration and 2) can arbitrarily deviate from the underlying ground truth. 

This paper is also inspired by an extensive line of work on \textbf{performative prediction} \citep{perdomo2020performative,mendler2020stochastic,hardt2022performative,mendler2022anticipating}, a theoretical framework in which predictions influence the outcome they intend to predict. We adapt the ideas of performative prediction to a context of human-ML collaboration and extend it in three major ways: 1) In our setting, the model predictions change the quality of the human-ML labels as a proxy for the ground truth (e.g., a doctor diagnosis), but the ground truth is held constant (e.g., the true patient diagnosis); 2) We introduce the concept of utility, to quantify the quality of a solution with respect to the ground truth. There can be several stable points with respect to model parameters in the performative prediction framework, but not all of them have the same utility, i.e., are equally good at approximating the indisputable ground truth; 3) The ground truth is unknown, and the mapping between human or ML labels and the ground truth is not fixed. To the best of our knowledge, we are the first to explore performative predictions where the deployment of ML models occurs while the model's performance relative to the ground truth is unknown, and only its similarity to human labels is available. Our empirical application is also novel in that it provides a first step towards investigating the implications of performative predictions for human-ML collaboration. 
\vspace{-2mm}
\section{Problem Statement}\label{sec:problem-statement} 
\vspace{-2mm}
We consider a setting in which time is separable in discrete time epochs $t=1,...,T$. At each $t$, a firm deploys machine learning model $M_t\in \mathcal{M}$ of a model class $\mathcal{M}$, with $M_t : \mathcal{X} \rightarrow \mathcal{Y}$. The model $M_t$ predicts a solution $Y\in \mathcal{Y}$ (e.g., a diagnosis) to a problem $X\in \mathcal{X}$ (e.g. the patient's symptoms) as a function of past data. The firm employs expert humans $H \in \mathcal{H}$ with $H : \mathcal{X}\times \mathcal{Y} \rightarrow \mathcal{Y}$, who solve the problems with the help of ML predictions. We will write $M_t(X)=Y_{M_t}$ and $H(X,Y_{M_t})=Y_{H_t}$. We assume that for all $X\in \mathcal{X}$, there exists an optimal solution $Y^*$, which is the indisputable ground truth. 
\vspace{-3mm}
\paragraph{The Firm's Learning Objective.} In many real-world applications, determining the ground truth label $Y^*$ can be extremely costly. For example, obtaining the correct medical diagnosis can often require the knowledge of various specialists (e.g., orthopedists, pediatricians, neurologists). Even when a single expert is enough, they can misdiagnose a patient's symptoms. Yet, in many of these cases, using the human labels $Y_{H_t}$ as a proxy for $Y^*$ is the only feasible option to build ML models. We allow the quality of $Y_{H_t}$ with respect to $Y^*$ to change. This means that two iterations of the ML model, $M_{t}$ and $M_{t+1}$, are trained on data from two different data generating processes, $(X,Y_{H_{t-1}})\sim~D_{t-1}$ and $(X,Y_{H_t})\sim D_t$, respectively. 

Without access to $Y^*$, the only feasible learning objective for a firm that wants to update its model parameters at time $t$ is the comparison between the latest human-ML collaborative labels with the new predictions.\footnote{We assume that models at time $t$ are trained exclusively on data from the previous period $t-1$, although we can generalize our setting to include any data points from 0 to $t-1$. } For a given loss function $\mathit{l}:\mathcal{Y} \times \mathcal{Y} \rightarrow \mathbb{R_+}$ we can write this as follows:
\begin{equation}\label{eq:loss}
    L(Y_{M_t}, Y_{H_{t-1}}):= \underset{H\in \mathcal{H}}{\mathbb{E}}[\underset{(X,Y_{H_{t-1}}) \sim D_{t-1}}{\mathbb{E}}\mathit{l}(Y_{M_t},Y_{H_{t-1}})].
\end{equation}
The firm wants to minimize the difference between the model predictions at time $t$ and the human labels at time $t-1$. We can write the firm's problem as selecting a model $M_t$ to minimize the loss function in Equation \ref{eq:loss}:
\begin{equation}\label{eq:performance}
  \underset{M_t\in \mathcal{M}}{minimize}\text{ }L(Y_{M_t}, Y_{H_{t-1}}).
\end{equation}
For simplicity, we assume that at each time $t$, the firm collects enough data to perfectly learn the human-ML solution. In other words, with the optimal model, $L(Y_{M_t}, Y_{H_{t-1}}) = 0$. We discuss relaxing this assumption in Appendix \ref{A:imperfect-learners}.
\vspace{-4mm}
\paragraph{Utility.} In our scenario, the firm cannot quantify the true quality of a solution $Y$ with respect to $Y^*$. The loss in Equation \ref{eq:performance} is just a surrogate for the loss $L(Y, Y^*)$, which is impossible or too costly to obtain. The firm thus defines the human label as "ground truth," and maximizes the similarity between model and human solutions, without knowing how close the human or ML solutions are to the indisputable ground truth. In order to evaluate the firm's progress in approximating $Y^*$, it is useful to define a measure of utility.
\begin{definition}(Utility)\label{def:utility}
    Let $d_X$ be a distance measure on $\mathcal{Y}$ with respect to a given $X\in \mathcal{X}$. The function $\mathbb{U}:\mathcal{X}\times\mathcal{Y}\rightarrow \mathbb{R}$ is a utility function on $\mathcal{X}\times\mathcal{Y}$, if $\forall X \in \mathcal{X}, Y_{min},Y,Y',Y^* \in \mathcal{Y}$
    \begin{enumerate}
        \item $\exists Y_{min}\in \mathcal{Y} : \mathbb{U}(X,Y)\in [\mathbb{U}(X,Y_{min}),\mathbb{U}(X,Y^*)]$ (bounded)
        \item $\exists \varepsilon>0 : |d_X(Y,Y^*) - d_X(Y',Y^*)|<\varepsilon \Rightarrow \mathbb{U}(X,Y) = \mathbb{U}(X,Y')$ ($\varepsilon$-sensitive)
        \item $d_X(Y,Y^*)+\varepsilon< d_X(Y',Y^*) \Rightarrow \mathbb{U}(X,Y) > \mathbb{U}(X,Y')$ (proximity measure)
    \end{enumerate}
\end{definition}
The utility of a solution for the firm is maximal if $Y$ is $\varepsilon$-close to $Y^*$ with respect to the underlying problem $X$. The variable $\varepsilon$ should be interpreted as the threshold below which a firm perceives no difference between two outcomes, i.e., it does not care about infinitely small improvements.
\vspace{-3mm}
\paragraph{Collaborative Characteristic Function.} As time $t$ increases, the firm hopes that the distributions $D_t$ shift closer to the optimal distribution $D^*$, where $(X,Y) = (X,Y^*)$. In other words, for each model's  distance $d$, 
$d(D_t,D^*) > d(D_{t+1}, D^*)$. This could happen, for example, if humans were able to easily compare available solutions and pick the one that is closest to the indisputable ground truth.  

We can translate this continuous improvement into properties of the human decision function $H$ as follows:  for all $t=1,...,T$ and $X\in \mathcal{X}$,
\begin{equation}
   \underset{H\in \mathcal{H}}{\mathbb{E}}[\mathbb{U}(X,H(X,Y_{M_t}))] = \mathbb{U}(X,Y_{M_t}) + \delta_{M_t}.
\end{equation}\label{eq:rational-human}%
The firm's hope is that $\delta_{M_t} \geq 0$ for $M_t$. Effectively, $\delta_{M_t}$ \textit{characterizes} the human-ML collaboration for all utility levels of a model. If $\delta_{M_t}$ is positive, humans are able to improve on a ML prediction (and future model iterations will thus get better at approximating the ground truth). Instead, if $\delta_{M_t}$ is negative, humans will perform worse than the ML recommendations, and future model iterations will get progressively farther away from the ground truth. 

%To simplify (and without loss of generality), in Equation \ref{eq:rational-human}, we assume that ML models with the same utility always lead to the same utility difference when deployed alongside humans.\footnote{The assumption is unlikely to hold in most practical scenarios. However, the assumption allows us to abstract away from other characteristics of ML and human-ML labels, and plot the learning process as in Figure \ref{fig:theory}. In addition, the propositions in the rest of the paper do not rely on this assumption, as we treat $\Delta_{\mathbb{U}}$ from Definition 2 more generally as a function of the ML model and problem features.} 

We define the function given by Equation \ref{eq:rational-human} as the \ccf:
\begin{definition}(Collaborative Characteristic Function)
    For a utility function $\mathbb{U}$, humans $H\in \mathcal{H}$ and model $M \in \mathcal{M}$, we define the collaborative characteristic function $\Delta_{\mathbb{U}}: \mathbb{R}\times \mathcal{M} \rightarrow \mathbb{R}$ as follows: \[\Delta_{\mathbb{U}}(\mathbb{U}(X,Y_M), M)=\underset{H\in \mathcal{H}}{\mathbb{E}}[\mathbb{U}(X,H(X,Y_{M}))] = \mathbb{U}(X,Y_{M}) + \delta_M.\] 
\end{definition}\vspace{-2mm}
The function $\Delta_{\mathbb{U}}$ can take any arbitrary form. Several factors can affect $\Delta_{\mathbb{U}}$, e.g., ML explanations and monetary incentives (as we empirically explore in Section \ref{sec:setup}). Note that $\Delta_{\mathbb{U}}$ is also a function of the ML model $M$, and not just of its utility, because equal levels of utility across different ML models do not guarantee equal collaborative reactions from humans. In the rest of the paper however, we will shorten the notation and write $\Delta_{\mathbb{U}}(\mathbb{U}(X,Y_M))$.
\vspace{-3mm}
\paragraph{Collaborative Learning Path and Stable Points.}
Although $\Delta_{\mathbb{U}}$ has infinite support, a firm will only experience a discrete set of utility values achieved by humans with the help of ML recommendations.  
We call this the \clp. It is characterized by $\Delta_{\mathbb{U}}$, the utility of the first deployed model $s$, and the number of deployment cycles $T$:
\begin{definition}(Collaborative Learning Path) 
    Let $\Delta_{\mathbb{U}}$ be a collaborative characteristic function, $t=1,...,T\in \mathbb{N}_{\geq 1}$ the number of deployment cycles and $s=\mathbb{U}(X,Y_{M_1})$ the utility of the starting model. We define the \clp~ to be the  function \[\mathbb{L}_{\Delta_{\mathbb{U}}}(s,t) = \underset{H\in \mathcal{H}}{\mathbb{E}}[\underset{X\in\mathcal{X}}{\mathbb{E}}(\mathbb{U}(H(X,Y_{M_t}))].\]
\end{definition}\label{def:clc}\vspace{-1mm}
\begin{definition}(Stable Point)
    A stable point $\mathbb{L}_{\Delta_{\mathbb{U}}}(s,t)$ occurs at $t$ if for all $t' \geq t$,  $\mathbb{L}_{\Delta_{\mathbb{U}}}(s,t') = \mathbb{L}_{\Delta_{\mathbb{U}}}(s,t)$.
\end{definition}\label{def:stable-point}
Stable points are states where the utility remains constant in all future model deployments. If $Y^*$ is unique for all $X$, then this is also a stable point for the distribution shifts. 
Whether a firm can reach a stable point on its collaborative learning function depends on the shape of $\Delta_{\mathbb{U}}$ and the initial model utility $s$. Figure \ref{fig:theory} shows two examples of collaborative characteristic functions and collaborative learning paths. The 45-degree line includes the points where $\underset{X,H}{\mathbb{E}}[\mathbb{U}(X,H(X,Y))] = \underset{X}{\mathbb{E}}[\mathbb{U}(X,Y)]$ and maps the human performance at $t-1$ to the ML performance at $t$ under the assumption of perfect learning (i.e., $L(Y_{M_t}, Y_{H_{t-1}}) = 0$). Stable points will always lie on this line, because a stable point requires $\delta_t \approx 0$ ($|\delta_t|\leq \epsilon$), where $\epsilon$ is defined in Appendix \ref{A:comments-utility} and denotes the smallest change in utility that is possible for a given $\varepsilon$ from Definition \ref{def:utility}. If $|\delta_t| > \epsilon$, it indicates that humans' influence changes labels $Y$  relative to the most recent ML model, leading to a new data distribution. The model at $t+1$ will thus differ from $M_t$, preventing stability. When the model and human+ML labels differ, there are two possible cases. First, $\delta_{M_t}>\epsilon$, which implies that the collaborative characteristic function $\Delta_{\mathbb{U}}$ is above the 45-degree line on that portion of the domain (Figure \ref{fig:positive-case}). In this case, human+ML labels are closer to the indisputable ground truth than the model alone, which leads to improvements of subsequent model deployments. Second, if $\delta_{M_t}<-\epsilon$, the collaborative characteristic function is below the 45-degree line (Figure \ref{fig:negative-case}). In this case, human+ML labels are further away from the indisputable ground truth than the model alone, which leads to deterioration of subsequent model deployments.
\begin{figure}\vspace{-2mm}
     \begin{subfigure}[b]{0.5\textwidth}
     \centering
    \scalebox{0.35}{\includegraphics{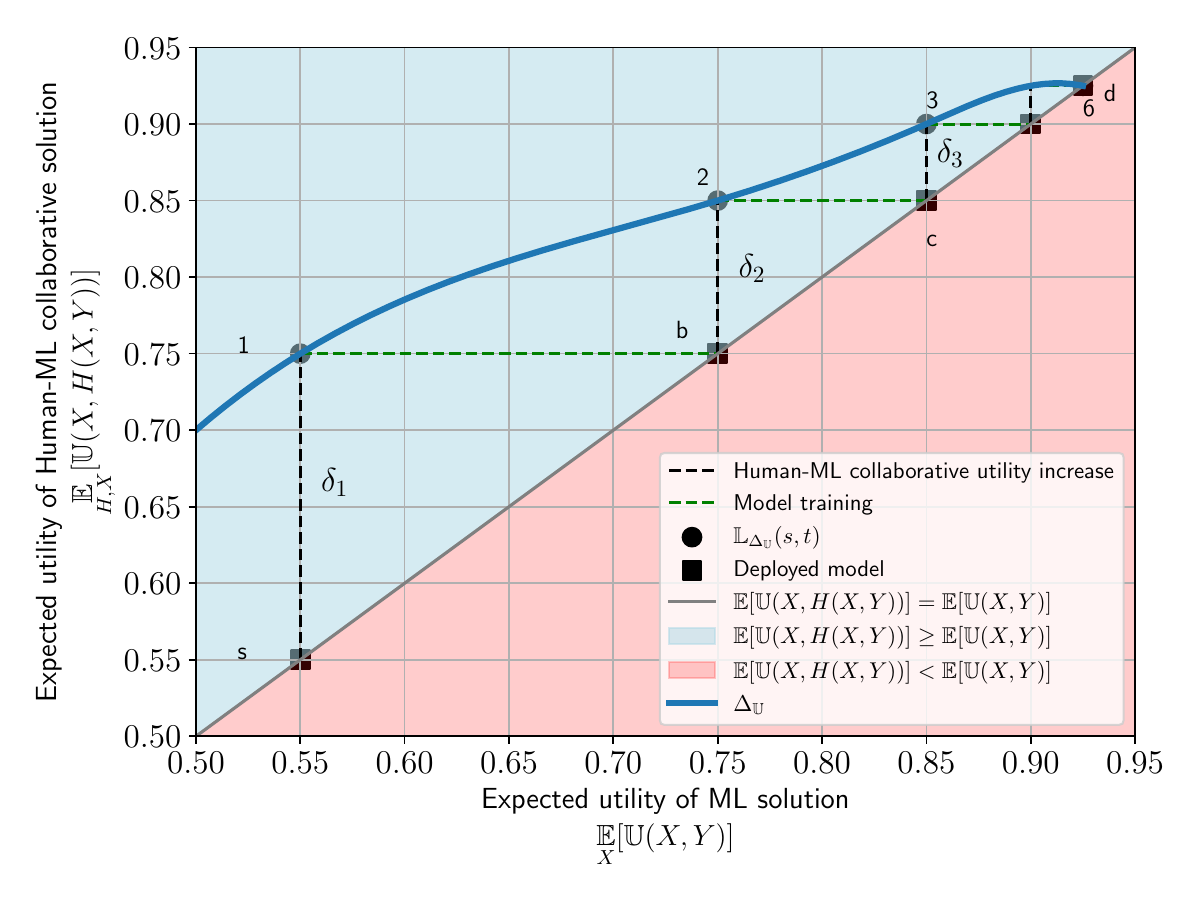}}
    \caption{Collaborative Improvement}
    \label{fig:positive-case}
     \end{subfigure}
     \hfill
     \begin{subfigure}[b]{0.5\textwidth}
     \centering
    \scalebox{0.35}{\includegraphics{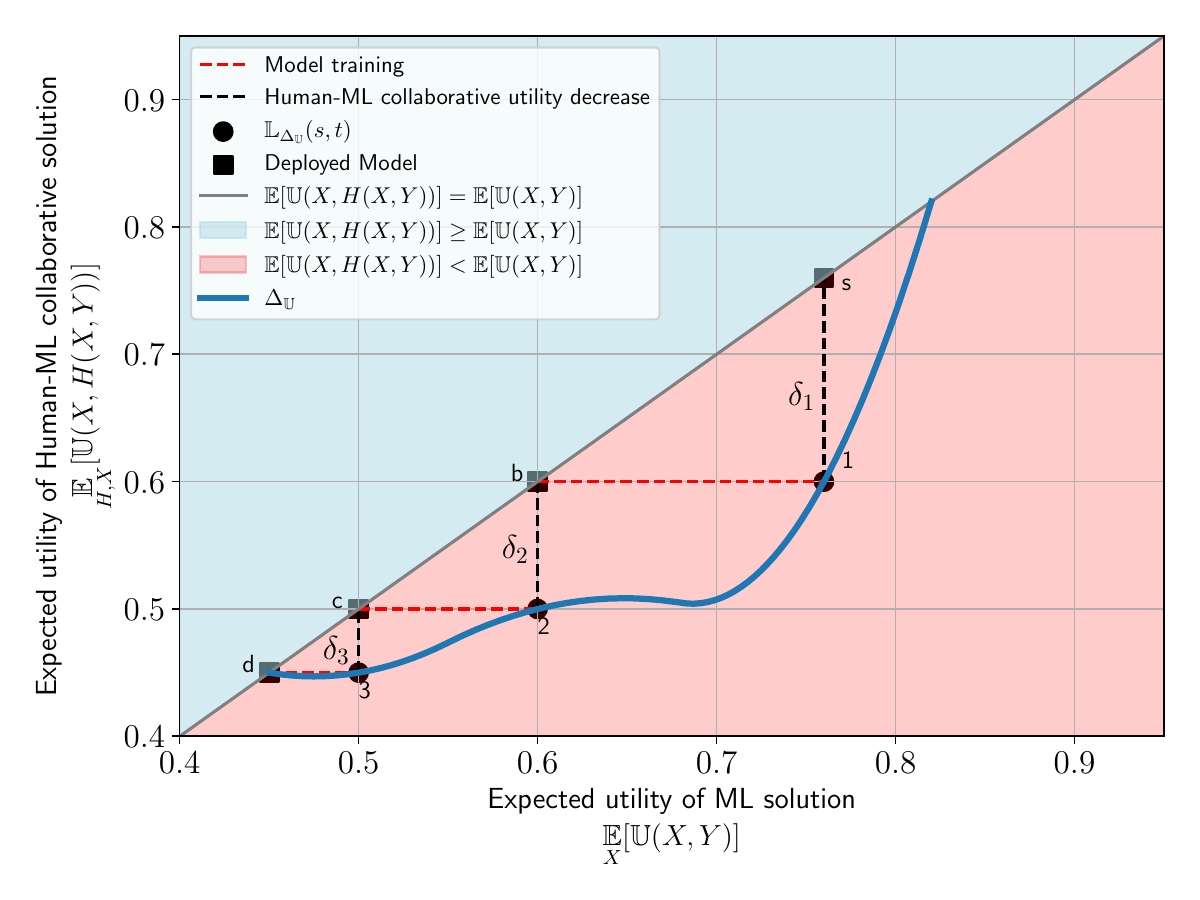}
    }
    \caption{Collaborative Harm}
    \label{fig:negative-case}
    \end{subfigure}
    \caption{\footnotesize \textbf{Collaborative Improvement (left)}: The firm's \ccf~and one \clp, if humans improve on the ML solution. The x-axis denotes the model expected utility, the y-axis denotes expected human+ML utility. The firm deploys a first model with utility (\textbf{s}). Then humans use the model and improve utility by $\delta_1$, leading to expected human+ML utility (\textbf{1}). The firm learns a new model with utility (\textbf{b}) on the new data distribution. This is viable under the assumption that the new model has the same utility as the previous period's human+ML labels, i.e., we can move horizontally from (\textbf{1}) to the 45-degree line at (\textbf{b}). Humans can further improve utility by $\delta_2$, which leads to expected utility (\textbf{2}). The dynamic improvement process continues until it reaches stable point utility (\textbf{6-d}). \textbf{Collaborative Harm (right)}: The firm deploys a model with expected utility (\textbf{s}) but the humans, when interacting with the model, decrease utility by $\delta_1$, with expected utility (\textbf{1}). The firm will thus learn a model of utility (\textbf{b}) on the new distribution. The downward spiral continues until stable point (\textbf{d}).}\label{fig:theory}\vspace{-5mm}
\end{figure}
We present the best-case and worst-case scenarios from Figure \ref{fig:theory} as Propositions \ref{lemma:collab-imp} and \ref{lemma:collab-harm} below:
\begin{lemma}(Collaborative Improvement)\label{lemma:collab-imp}
    If $\Delta_{\mathbb{U}}(\mathbb{U}(X,Y_M))\geq\mathbb{U}(X,Y_M)$ for all $M\in \mathcal{M}, X\in\mathcal{X}$. Then $\mathbb{L}_{\Delta_{\mathbb{U}}}(s,t)$, is non-decreasing with $t=1,...,T$ and for sufficiently large $T$ it exists a $t'\in [1,T]$ such that $\mathbb{L}_{\Delta_{\mathbb{U}}}(s,t')$ is a stable point.
\end{lemma}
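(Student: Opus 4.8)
The plan is to reduce the claim to a scalar recursion and then combine monotonicity, an upper bound, and the quantization of utility. Write $u_t:=\mathbb{E}_X[\mathbb{U}(X,Y_{M_t})]$ for the expected utility of the model at epoch $t$ and abbreviate $a_t:=\mathbb{L}_{\Delta_{\mathbb{U}}}(s,t)=\mathbb{E}_X[\Delta_{\mathbb{U}}(\mathbb{U}(X,Y_{M_t}))]$. The perfect-learning assumption $L(Y_{M_t},Y_{H_{t-1}})=0$ is exactly the statement that the model deployed at epoch $t+1$ reproduces the previous period's human+ML utility, i.e. $u_{t+1}=a_t$ — the horizontal move onto the $45$-degree line in Figure \ref{fig:theory}. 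First I would establish the monotonicity claim: taking $\mathbb{E}_X$ in the hypothesis $\Delta_{\mathbb{U}}(\mathbb{U}(X,Y_M))\ge\mathbb{U}(X,Y_M)$ gives $a_t\ge u_t$ for every $t$, and combining with $u_{t+1}=a_t$ yields $a_{t+1}\ge u_{t+1}=a_t$, so $\{a_t\}$ is non-decreasing.

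Next I would bound the path from above. Property 1 of Definition \ref{def:utility} gives $\mathbb{U}(X,Y)\le\mathbb{U}(X,Y^*)$ for all $Y$, so $\Delta_{\mathbb{U}}(\mathbb{U}(X,Y_{M_t}))=\mathbb{E}_H[\mathbb{U}(X,H(X,Y_{M_t}))]\le\mathbb{U}(X,Y^*)$, and taking $\mathbb{E}_X$ gives $a_t\le\mathbb{E}_X[\mathbb{U}(X,Y^*)]=:\bar U^*$ for all $t$. A non-decreasing sequence bounded above converges, but convergence is not sufficient: Definition \ref{def:stable-point} asks that $\mathbb{L}_{\Delta_{\mathbb{U}}}$ become exactly constant after finitely many epochs.

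The finite-time conclusion is where the $\varepsilon$-sensitivity of the utility function enters, through the quantity $\epsilon$ of Appendix \ref{A:comments-utility}, the smallest realizable positive change in utility. The dichotomy I would prove is that every increment satisfies $a_{t+1}-a_t=\delta_{M_{t+1}}\in\{0\}\cup[\epsilon,\infty)$: since utility cannot change by a nonzero amount below $\epsilon$, either the human+ML labels are utility-indistinguishable from the model ($\delta=0$) or they improve on it by at least $\epsilon$. As $a_1\ge s$ and $a_t\le\bar U^*$, at most $\lceil(\bar U^*-s)/\epsilon\rceil$ increments can be positive, so for $T$ large enough some $t^\star<T$ has $\delta_{M_{t^\star+1}}=0$, i.e. $a_{t^\star+1}=u_{t^\star+1}$ lies on the $45$-degree line. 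Self-perpetuation then follows: $u_{t^\star+2}=a_{t^\star+1}=u_{t^\star+1}$ reproduces the same model utility, hence (reading $\Delta_{\mathbb{U}}$ as the shortened function of utility) the same $a$, so $\mathbb{L}_{\Delta_{\mathbb{U}}}(s,t'')=\mathbb{L}_{\Delta_{\mathbb{U}}}(s,t^\star+1)$ for all $t''\ge t^\star+1$, which is the required stable point.

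The main obstacle is the passage from the pointwise quantization of $\mathbb{U}(X,\cdot)$ to a uniform positive gap $\epsilon$ for the aggregated path $a_t$, because both the expectation over humans in $\Delta_{\mathbb{U}}$ and the expectation over $X$ can in principle smear a discrete per-instance utility into a continuum of averages. I expect to handle this by invoking the appendix's definition of $\epsilon$ as the smallest admissible change in expected utility (equivalently, by assuming $\mathcal{X}$ and the support of $H$ induce a discrete value set), so that the ``$0$-or-at-least-$\epsilon$'' dichotomy survives aggregation; by comparison the monotonicity and boundedness steps are routine.
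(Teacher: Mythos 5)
Your proposal is correct and follows essentially the same route as the paper's own proof: monotonicity from $\Delta_{\mathbb{U}}(\mathbb{U}(X,Y_M))\ge\mathbb{U}(X,Y_M)$ combined with the perfect-learning identification $u_{t+1}=a_t$, an upper bound from Property 1 of Definition \ref{def:utility}, and finite termination via the minimum utility increment $\epsilon$ of Appendix \ref{A:comments-utility}. Your version is marginally more explicit in two places the paper leaves implicit --- the self-perpetuation of a zero increment and the question of whether per-instance quantization survives the expectations over $X$ and $H$ --- but the argument and its ingredients are the same.
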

\vspace{-4mm}
\begin{proof}(sketch)
    Because $\mathbb{U}$ is bounded, $\delta_M$ must be 0 in the extreme points. Furthermore, because of the $\varepsilon$-sensitivity of $\mathbb{U}$, the steps $t$ until reaching the maximum utility are also bounded. It follows that there exists a $t\in \mathbb{N}$ such that $\mathbb{L}_{\Delta_{\mathbb{U}}}(s,t) - \mathbb{L}_{\Delta_{\mathbb{U}}}(s,t+1) = 0$, which is a stable point. See Appendix \ref{A:proofs-props} for the complete proof.
\end{proof}
\begin{lemma}(Collaborative Harm)\label{lemma:collab-harm}
    If $\Delta_{\mathbb{U}}(\mathbb{U}(X,Y_M))\leq\mathbb{U}(X,Y_M)$ for all $M\in \mathcal{M}, X\in\mathcal{X}$. Then $\mathbb{L}_{\Delta_{\mathbb{U}}}(s,t)$, is non-increasing with $t=1,...,T$ and for sufficiently large $T$ it exists a $t'\in [1,T]$ such that $\mathbb{L}_{\Delta_{\mathbb{U}}}(s,t')$ is a stable point.
\end{lemma}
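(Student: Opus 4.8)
The plan is to exploit the exact symmetry with the Collaborative Improvement statement (Proposition \ref{lemma:collab-imp}), flipping every inequality and every extreme. First I would make the one-step recursion explicit. Under the perfect-learning assumption $L(Y_{M_t}, Y_{H_{t-1}}) = 0$, the utility of the model deployed at time $t$ equals the human+ML utility realized at $t-1$, so $\mathbb{U}(X, Y_{M_t}) = \mathbb{L}_{\Delta_{\mathbb{U}}}(s, t-1)$, and by the definition of $\Delta_{\mathbb{U}}$ this yields the iteration $\mathbb{L}_{\Delta_{\mathbb{U}}}(s,t) = \Delta_{\mathbb{U}}(\mathbb{L}_{\Delta_{\mathbb{U}}}(s, t-1))$ with initial value $\mathbb{L}_{\Delta_{\mathbb{U}}}(s,1) = \Delta_{\mathbb{U}}(s)$. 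This recasts the learning path as the orbit of the starting utility $s$ under repeated application of $\Delta_{\mathbb{U}}$, which is the geometric content of the horizontal/vertical moves depicted in Figure \ref{fig:negative-case}.

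Second, monotonicity is immediate from the hypothesis: since $\Delta_{\mathbb{U}}(u) \le u$ for every attainable model utility $u$, each step satisfies $\mathbb{L}_{\Delta_{\mathbb{U}}}(s,t) = \Delta_{\mathbb{U}}(\mathbb{L}_{\Delta_{\mathbb{U}}}(s,t-1)) \le \mathbb{L}_{\Delta_{\mathbb{U}}}(s,t-1)$, so the path is non-increasing in $t$. By the bounded property of $\mathbb{U}$ (Definition \ref{def:utility}, item 1) the path is bounded below by $\mathbb{U}(X, Y_{min})$, and at that lower extreme one has $\delta_M = 0$, since $\Delta_{\mathbb{U}}$ cannot output a utility below the minimum while the hypothesis forbids an increase.

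Third, and this is the crux, I would argue termination from $\varepsilon$-sensitivity. Items 2 and 3 of Definition \ref{def:utility} force the range of $\mathbb{U}(X, \cdot)$ to be discrete: utility is constant whenever distances to $Y^*$ differ by less than $\varepsilon$ and is strictly ordered once they differ by more than $\varepsilon$, so the attainable utility values are separated by a minimal gap $\epsilon > 0$, the quantity defined in Appendix \ref{A:comments-utility}. Combined with the finite range $[\mathbb{U}(X,Y_{min}), \mathbb{U}(X,Y^*)]$, only finitely many distinct values are available. A non-increasing sequence confined to a finite set can strictly decrease only finitely often, so there exists $t'$ with $\mathbb{L}_{\Delta_{\mathbb{U}}}(s,t') = \mathbb{L}_{\Delta_{\mathbb{U}}}(s,t'+1)$, i.e. $|\delta| \le \epsilon$; taking $T$ at least this finite bound guarantees $t' \in [1,T]$, and the recursion then makes every later iterate coincide, so $\mathbb{L}_{\Delta_{\mathbb{U}}}(s,t')$ is a stable point.

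The step I expect to be the main obstacle is making the discreteness argument rigorous once the expectations over $H$ and $X$ in the definition of $\mathbb{L}_{\Delta_{\mathbb{U}}}$ are taken seriously, because an average of finitely-many-valued per-instance utilities need not itself lie in a finite set. I would resolve this by arguing pointwise in $X$ (the hypothesis is stated for all $X \in \mathcal{X}$): each per-instance path stabilizes after at most $\lceil (\mathbb{U}(X,Y^*) - \mathbb{U}(X,Y_{min}))/\epsilon \rceil$ steps, and taking the supremum of these uniform bounds over $X$ yields a single $t'$ at which the whole averaged path $\mathbb{L}_{\Delta_{\mathbb{U}}}(s,\cdot)$ is stable; alternatively one shows the minimal per-instance gap $\epsilon$ propagates to a strictly positive gap for the expected path, so that the finite-descent argument applies directly to $\mathbb{L}_{\Delta_{\mathbb{U}}}$ itself.
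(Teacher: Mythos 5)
Your proposal is correct and follows essentially the same route as the paper, which simply declares the Collaborative Harm case ``analogous'' to Collaborative Improvement: monotonicity from the hypothesis combined with perfect learning, and termination from boundedness plus the minimal utility increment $\epsilon$ of Appendix~\ref{A:comments-utility}. Your final paragraph flagging that the expectation over $H$ and $X$ need not preserve the discreteness of per-instance utilities is a genuine subtlety the paper's own proof glosses over, and your pointwise-in-$X$ fix (with a uniform bound on the number of $\epsilon$-steps) is a reasonable way to close it.
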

\vspace{-4mm}
\begin{proof}
    Similar to the proof of Proposition \ref{lemma:collab-imp}.
\end{proof}\vspace{-4mm}
In practice, a firm's collaborative characteristic function can take any arbitrary shape, with portions above and portions below the 45-degree line. As long as the function is continuous, at least one stable point exists, and possibly more. When more than one stable point exist, the firm would like to reach the stable point with the highest utility (i.e., the highest point of the characteristic function lying on the 45-degree line). However, since the firm does not have access to the indisputable ground truth, when it reaches a stable point it does not know where such point lies on the 45-degree line.  

In what follows, we offer a proof of concept of our theoretical setup. We empirically explore a context where it is easy for us to identify the indisputable ground truth. Although with some simplifying assumptions, the setting allows us to approximate a portion of the collaborative characteristic function, and explore the effects of human behavior on its shape, particularly the effect of monetary incentives and alternative solution selection criteria. We present study participants with instances of hard knapsack problems to answer the following research questions:

\textbf{RQ1:} \textit{How do monetary incentives affect human performance?} To keep our treatment condition manageable, we explore the effect of different levels of performance bonuses on $\mathbb{U}(H(X,.))$, i.e., the human performance without ML recommendations.

\textbf{RQ2:} \textit{Can we approximate the human-ML collaborative characteristic function $\Delta_{\mathbb{U}}$?} Here, we hold the performance bonus constant, and test humans' effect $\delta_M$ on utility for different levels of ML performance. This will enable us to construct two approximations of $\Delta_{\mathbb{U}}$ for a specific task.
\vspace{-2mm}
\section{Experimental Setup}\label{sec:setup}\vspace{-2mm}
In this section, we describe our user study. The goal of our experimental setup is to simulate an environment in which users work on difficult tasks with the help of ML. The company responsible for deploying ML models does not know the optimal solution $Y^*$ (e.g., the true patient's diagnosis), and it trains ML models to replicate experts' decisions (doctor diagnoses). To evaluate how the company's models perform against $Y^*$, we need a setting in which we, as researchers, know the quality of any solution Y using a utility function $\mathbb{U}(X,Y)$. This allows us to make absolute quality assessments of solutions. Note that this is often unattainable in practice, as we argued in the introduction. The knapsack problem is particularly well suited for this context.
\vspace{-3mm}
\paragraph{The Knapsack Problem.} In our experiment, users solve instances of the knapsack problem. An instance involves selecting which of $n=18$ items to pack into a knapsack, each with a weight $w$ and a value $v$. The objective is to maximize value without exceeding the weight limit $W$ of the knapsack (between 5 and 250). We focus on the one-dimensional 0-1 knapsack problem, in which participants choose which items to pack (see Appendix \ref{def:knapsack-problem} for a formal definition). We constrain the weights, values, and capacity of our instances to integer values, to make them easier to interpret by humans. We describe the details of the knapsack problem generation in Appendix \ref{A:knapsack}.

The knapsack problem has desirable properties for the empirical application of our framework. First, users do not require special training---beyond a short tutorial---to find a solution to the problem. Yet, the task is hard for humans, especially with a growing number of items \citep{murawski2016humans}. Thus, the optimal solution $Y^*$ is not obvious. Second, we can generate solutions to the knapsack problem in two ways. The ``optimal'' solution can be found with dynamic programming. The ``ML'' solution can be found by imitating what humans select and computing the training loss as the difference between the items selected by participants versus items selected by a model. Finally, it allows us to showcase our theoretical approach with two different utility functions and two selection criteria to approximate the \ccf.

This setup allows us to quantify the utility of the proposed solution relative to the optimal solution. We define utility for the knapsack problem as follows:
\begin{definition} (Economic Performance)
    For a knapsack instance $X=((w_1, \cdots, w_n), (v_1, \cdots, v_n), W)$ with optimal solution $\underset{x_1, \cdot , x_n ; \sum_{i=1}^n x_i w_i \leq W}{\max }\sum_{i=1}^n x_i v_i =: Y^*$ and a valid solution $Y$ we call the function $\mathbb{U}_{\text{Econ}}(X,Y) = \frac{Y}{Y^*}$ the economic performance of $Y$ given $X$.
\end{definition}\label{def:econ-performance}
Appendix \ref{A:optimality} contains details about $\mathbb{U}_{\text{Econ}}(X,Y)$ and discusses our results using an alternative utility function $\mathbb{U}_{\text{Opt}}(X,Y)$ (optimality), which is equal to one if a solution is optimal and zero otherwise. Note that there can be multiple optimal combinations of items to pack, but the optimal value $Y^*$ is always unique.
\begin{table}
\footnotesize
\centering
\begin{tabular}{lccccccc}
\hline
\textbf{Model} & \textbf{None} & \textbf{q1} & \textbf{q2} & \textbf{q3} & \textbf{q4} & \textbf{q5} & \textbf{q6} \\ 
\textbf{Mean $\mathbb{U}_{\text{Econ}}(X,Y)$} & . & 0.717 & 0.800 & 0.844 & 0.884 & 0.899 & 0.920 \\ 
\textbf{SD} & . & 0.083 & 0.105 & 0.098 & 0.105 & 0.088 & 0.085 \\ \hline
\textbf{No Bonus} &N=102&&&&&&\\
\textbf{2-cent Bonus} &N=98&&&&&&\\
\textbf{10-cent Bonus}$^*$&N=100+117&N=64&N=78&N=194&N=179&N=70&N=191\\
\textbf{20-cent Bonus} &N=96&&&&&&\\ \hline 
\end{tabular}\caption{\footnotesize Matrix of treatment conditions. The columns denote information on the ML recommendation performance. The rows denote bonus payments for performance. The number of study participants are presented in the relevant cells. $^*$We ran the 10-cent bonus treatment with no ML recommendation twice: once without a comprehension quiz for the bonus structure (100 participants) and once with the comprehension quiz (117).}\label{tbl:treatment_conds}\vspace{-4mm}
\end{table}\vspace{-4mm}
\paragraph{Study Design.} We recruited participants from Prolific\footnote{\url{https://www.prolific.com/}} exclusively from the UK to ensure familiarity with the currency and weight metrics used to describe the knapsack items and monetary incentives in the study. Appendix \ref{A:survey-design} presents screenshots of the web interface for each step of the study. 
At the beginning of the study, participants received a tutorial on the knapsack problem, our web application's interface, and the payment structure, described below. After the tutorial, the participants solved two practice problems and received feedback on their submission's performance. For the main task, each participant received 10 knapsack problems generated by Algorithm \ref{algo:generate_hard_knapsack}. For each problem, they had 3 minutes to submit their solution. If the participant did not actively click on the submit button, the selected items were automatically submitted at the 3-minute mark. Participants could take unlimited breaks between problems. At the end of the study, we asked participants about their demographics, previous experience with the knapsack problem, and how much effort they put in solving the task. 

A total of 1,408 participants completed the study; we removed 119 participants due to forbidden browser reloads or uses of the browser's back-button, which left 1,289 for the analyses below. See Appendix \ref{A:overview-stats} for an overview of participants' demographics. 
On average, participants' compensation implied an hourly wage of £12.17 (\$15.22), which is above the UK minimum wage of £11.44. Appendix \ref{A:payment-details} contains additional payment details.

Every participant received a base payment of £2.00 (approx. \$2.50) if they achieved at least 70\% of the value of the optimal solution, averaged across the 10 knapsack instances they solved. We set the 70\% threshold to discourage participants from  randomly selecting items, as randomly-generated solutions that pick items until reaching the weight capacity have an average $\mathbb{U}_{\text{Econ}}$ around 60\%.

Participants were randomly allocated into four monetary treatments and seven algorithmic recommendations (see Table \ref{tbl:treatment_conds}). All monetary conditions were tested while users had no access to algorithmic recommendations. Participants in the \textbf{No Bonus} condition did not receive any additional payments beyond the base payment. Participants in the \textbf{2-cent Bonus} condition received an additional £0.02 for each percentage point of $\mathbb{U}_{\text{Econ}}$ above 70\%. For example, if a participant achieved on average $\mathbb{U}_{\text{Econ}}$ = 85\%, they would receive $\text{£}2.00 + 15\times \text{£}0.02 = \text{£}2.30$. Participants in the \textbf{10-cent Bonus} and \textbf{20-cent Bonus} treatments had similar incentives for performance, but higher monetary rewards for each additional percentage point increase in performance (£0.10 and £0.20, respectively).

We ran the \textbf{10-cent Bonus} treatment twice. In the second round, we introduced a comprehension quiz to ensure that our participants understood the payment structure. Within the \textbf{10-cent Bonus} with bonus comprehension quiz, we randomized access to ML recommendations. Users were randomly allocated to one of seven ML treatments. The control group had no ML recommendations. The other six groups had access to recommendations from progressively better ML models, denoted \textbf{q1} through \textbf{q6} as Table \ref{tbl:treatment_conds} shows on each of the last six columns. 

The rationale for selecting the treatment conditions described above is the following. First, we want to understand whether monetary incentives change human effort, which in turn would translate into a shift in the \ccf~from Figure \ref{fig:theory}. Although ideally one would want to approximate the entire \ccf~under different incentive structures, to ensure statistical power under a limited budget, we opted for testing the role of varying bonuses without ML recommendations. Second, we want to understand how ML models of varying performance affect the human-ML performance to draw the \ccf. Ideally, one would want these models to be trained on human labels (themselves potentially affected by previous model iterations) to mirror the theoretical framework, but this would have required sequential rounds of experimentation. Instead, we approximate the sequential nature of our framework by training all models on optimally solved knapsack instances, rather than instances solved by humans. Appendix \ref{A:model-training} discusses further details on the model training. Ex-post, we verify that models trained on human labels (from the data collected during the experiment) have a wide range of utility levels. Appendix Figure \ref{fig:ex-post-verify} confirms that the utilities of the models selected for our treatments fall comfortably within the large range of utility levels of models trained on human labels. Nonetheless, we emphasize that these design choices limit our ability to truly replicate our theoretical model. We return to the limitations of this approach in the conclusion.
\vspace{-2mm}
\section{Results}\label{sec:results}
\vspace{-2mm}
\begin{figure*}%
     \begin{subfigure}[b]{0.42\textwidth}
         \centering
         \includegraphics[height = 1.00\textwidth]{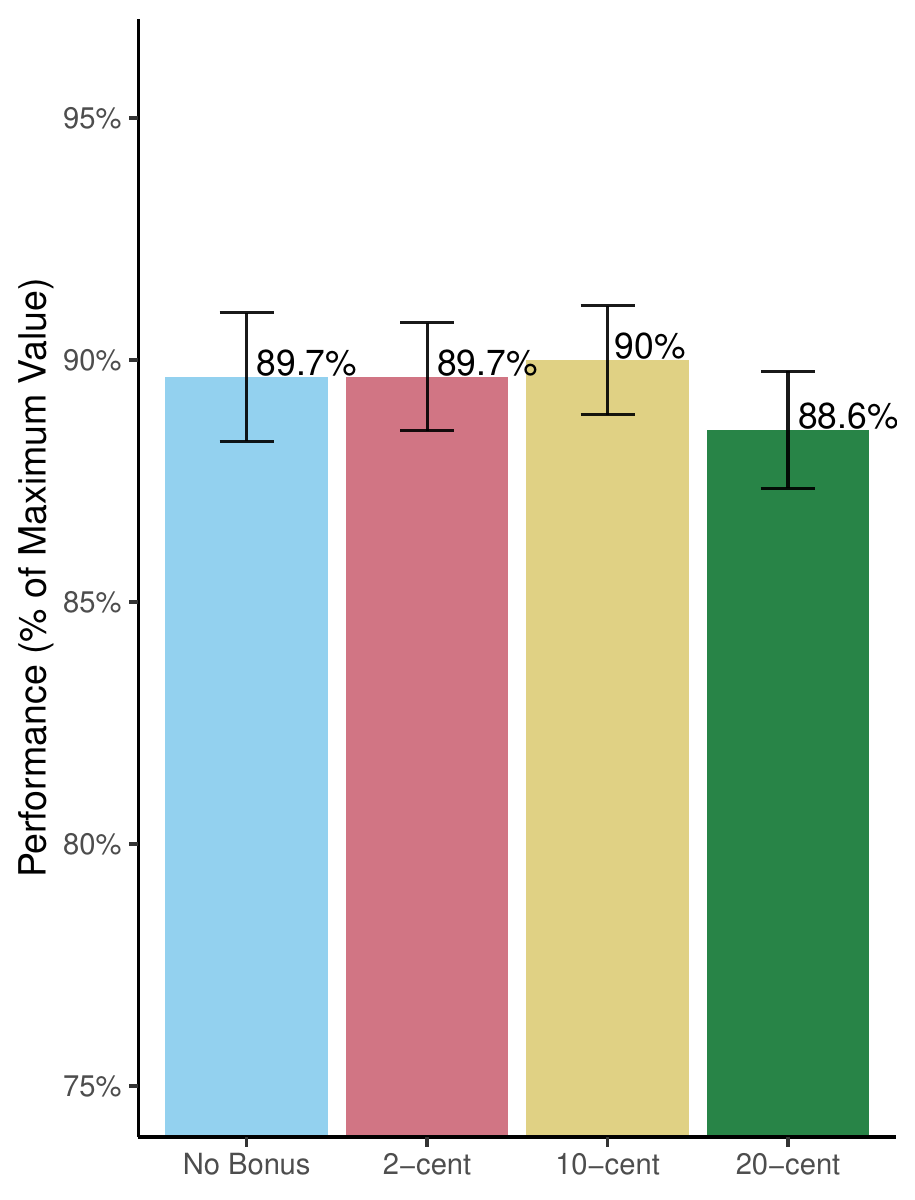}
         \centering
         \caption{Different bonus incentives.}
    \label{fig:incentives_barplot}
     \end{subfigure}
     \hfill
     \begin{subfigure}[b]{0.5\textwidth}
         \centering
         \includegraphics[height = 0.7\textwidth]{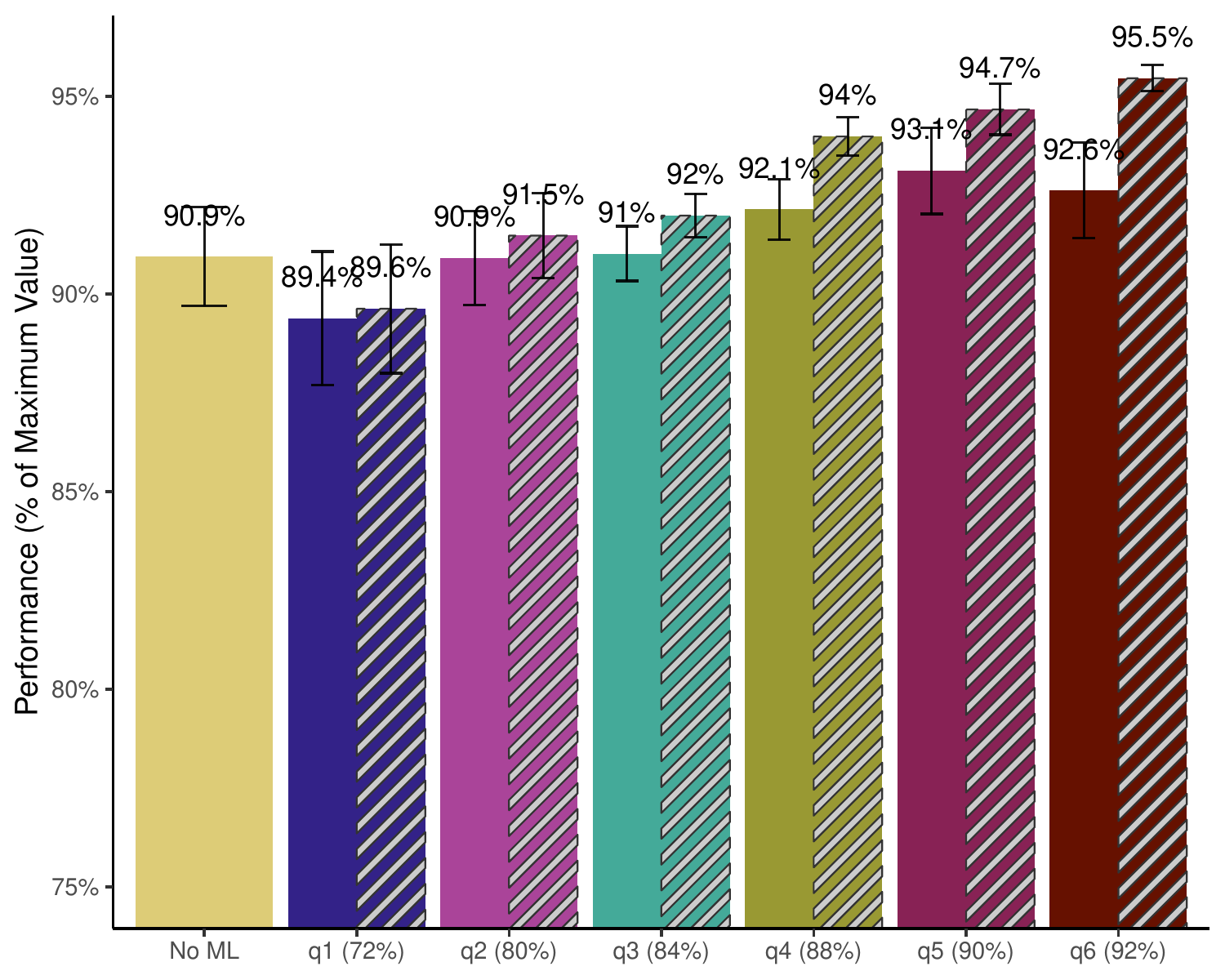}
         \caption{Different ML recommendations.}
         \label{fig:ml_barplot}
     \end{subfigure}
     \hfill
        \caption{\footnotesize Economic Performance Across Treatments. Error bars denote 95\% confidence intervals based on standard errors clustered at the user level.
        Solid bars denote the average economic performance of the submitted solution, striped bars denote the performance if one picked the higher solution between the submitted solution and the provided ML recommendation. Appendix Figure \ref{fig:performance_barplot_opt} replicates the analysis using optimality as a measure of utility.}\label{fig:performance_barplot}\vspace{-3mm}
\end{figure*}%
We start by discussing the null results of monetary performance incentives (\textbf{RQ1}).
Figure \ref{fig:incentives_barplot} shows the results. On average, user economic performance without any bonus is 89.7\% (light blue bar). None of the bonus alternatives are statistically distinguishable from the control group, nor from each other, and their point estimates are all between 88.6\% (for the 20-cent bonus) and 90\% (for the 10-cent bonus).

The null effect of monetary incentives is not due to the fact that users did not understand the bonus structure. To test this hypothesis, we can compare the performance of users in the two 10-cent bonus treatments without algorithmic recommendations (third column in Figure \ref{fig:incentives_barplot} and first column in Figure \ref{fig:ml_barplot}, both yellow). These two treatments only differ by the fact that the one in Figure \ref{fig:ml_barplot} had a comprehension quiz for the bonus structure. The difference in performance between the two treatments is a mere 0.9\%, not statistically different from zero ($p=0.268$, based on standard errors clustered at the user level). If we assume that the effect of monetary incentives without ML support is greater than or equal to their effect with ML support, these results imply that monetary incentives are unlikely to shift the \ccf.

\textbf{RQ2:} We test the introduction of ML recommendations with a single bonus structure, the 10-cent bonus. Figure \ref{fig:ml_barplot} presents the results. Focusing on the solid bars, three insights are noteworthy. First, comparing the first two columns (yellow and blue), models with low economic performance seem to lead humans to perform slightly worse than if they were not supported by ML recommendations (89.4\% versus 90.9\%). This comparison is not statistically significant ($p=0.147$), likely due to low statistical power, but the level difference is not trivial (especially when looking at optimality as a measure of utility in Appendix Figure \ref{fig:performance_barplot_opt}). Despite this, humans' utility does improve relative to the algorithmic recommendations (89.4\% versus 71.8\% in the q1 treatment, $p=1.8$e-$28$). Second, models with better economic performance lead to increases in human performance, as evidenced by the progressively increasing economic performance from q1 to q6. Third, even if human performance increases with the performance of the ML recommendation, the increments in performance are quantitatively fairly small and sometimes statistically indistinguishable from one another, going from 89.4\% when the model's performance is 72\%, to 92.6\% when the model's performance is 92\%.\footnote{Regression results, controlling for time taken to solve each problem, are presented in Appendix Table \ref{tab:main_regs}.} To evaluate whether the results are at least in part due to users changing their effort level, Appendix Figure \ref{fig:performance_timebarplot} plots time spent on each problem across treatment conditions, and shows no clear patterns. Additionally, Appendix Figure \ref{fig:reported_effort} shows similarly high reported effort levels across ML and non-ML conditions.\footnote{Appendix Figure \ref{fig:reported_effort} highlights an interesting contrast between users with and without ML recommendations. Indeed, participants without ML stated that they would have exerted less effort if they had been given ML recommendations. In contrast, the majority of those who received ML recommendations believed they would have exerted similar effort even without ML.} 
\subsection{Approximation of the \ccf} \label{sec:emp_results}
 \begin{figure}
    \centering
        \begin{subfigure}[b]{0.49\textwidth}
         \centering
         \includegraphics[width=\textwidth]{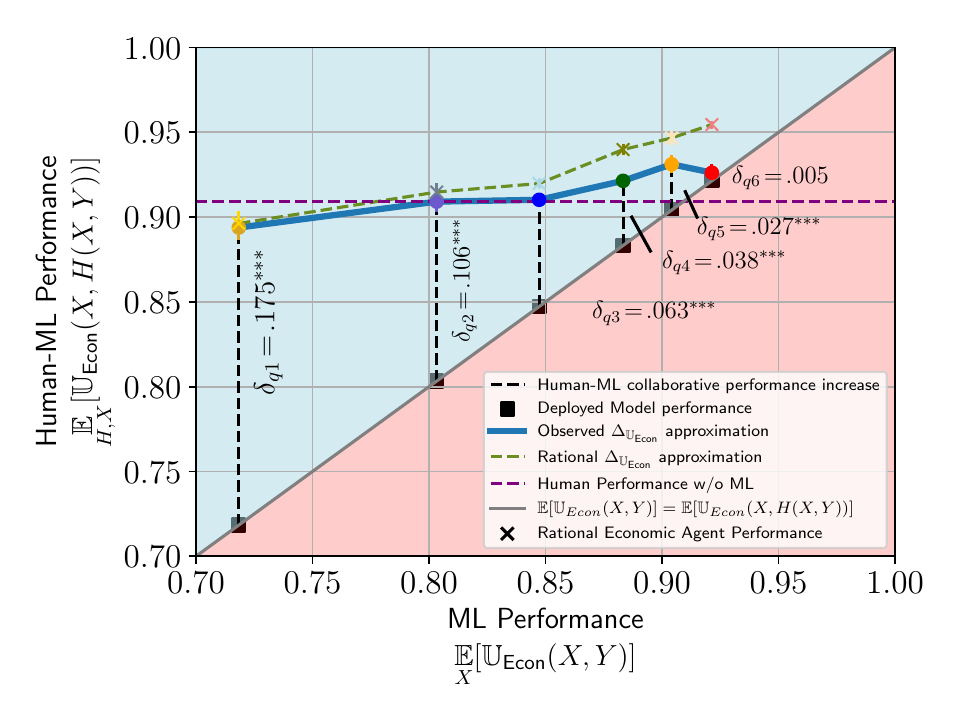}
    
     \end{subfigure}
          \hfill
    \begin{subfigure}[b]{0.49\textwidth}
         \centering
         \includegraphics[width=\textwidth]{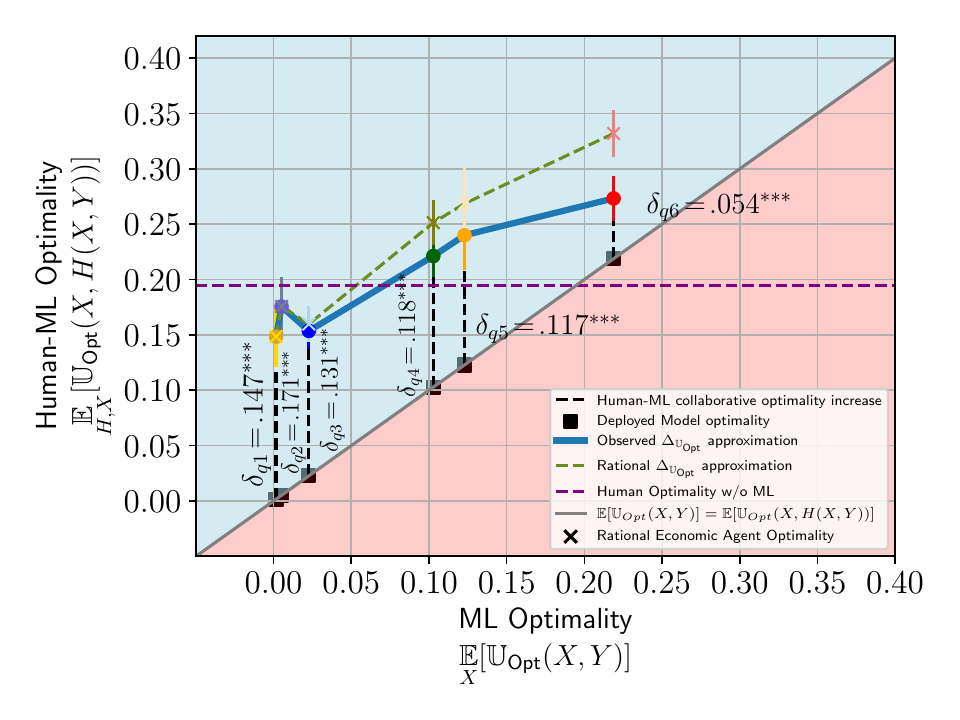}
     \end{subfigure}
     \caption{\footnotesize Empirical approximations of collaborative characteristic functions for two utility functions: economic performance (left) and optimality (right). Error bars represent confidence intervals based on participant-level clustered standard errors. Significance levels for the estimates of $\delta_{qi}$ are based on t-tests against the null $\delta_{qi}=0$. ***: $p<0.001$}
    \label{fig:empirical-collaborative-learning-curves}\vspace{-5mm}
\end{figure}
Figure \ref{fig:empirical-collaborative-learning-curves} embeds our empirical results in the framework presented in Section \ref{sec:problem-statement}. On the x-axis, we plot the economic performance of the six ML models deployed in our study. On the y-axis, we plot the performance of the solutions submitted by humans who receive ML recommendations: economic performance on the left plot and optimality on the right plot. Each of the points correspond to the six ML treatments of Figure \ref{fig:ml_barplot}. We linearly interpolate the estimated points to form an approximation of the \ccf{} $\Delta_{\mathbb{U}}$ (solid blue line). Looking at the left plot, in this approximation of a \ccf, humans improve on the ML recommendations for ML performance levels between 70\% and 92\%. The estimated $\delta_{qi}$'s range from 17.5\% ($p=1.8$e-$28$) for $q1$, to 0.5\% ($p=0.46$) for $q6$. We denote $q6$ a stable point since the human improvement is estimated to be small and statistically indistinguishable from zero.
The results imply that, for this portion of the domain, a firm could deploy a model with below-human performance and still converge to a stable point with 92\% performance in subsequent deployments. The insights from the right plot are qualitatively similar, although there is no stable point in the portion of the domain that we explored. 

An adjustment to the solution selection method allows us to simulate an additional \ccf. %The $\delta_{qi}$ improvements are always positive (or indistinguishable from zero for q6), but they could have been even larger. 
Indeed, in this specific setting, as participants add items to the knapsack, in principle, they can easily compare the value of their solution to the value of the ML recommendation (both of which appear at the top of the interface, see Appendix Figure \ref{fig:ui}). If humans had picked the highest between their solution and the ML recommendation, the \ccf{} would have shifted upward to the dashed green line in Figure \ref{fig:empirical-collaborative-learning-curves}, and the stable point would have achieved even higher performance. The discrepancy between the solid and dashed lines increases as the ML model improves, suggesting that even in a straightforward comparison, humans do not follow ML recommendations when it is in their best financial interest to do so (the difference can also be seen by comparing the solid and striped bars in Figure \ref{fig:ml_barplot}). Appendix Figure \ref{fig:trust-ignorance} decomposes the net effect into two parts. On one hand, as the model performance improves, humans are more likely to follow its recommendations. On the other, when they do not follow the ML recommendation, as the model performance improves, it is much more likely that the submitted solution is inferior compared to the recommendation. 
Under both solid and dashed \ccf s, we can imagine possible \clp s, $\mathbb{L}_{\Delta_{\mathbb{U}}}$. With this shape of $\Delta_{\mathbb{U}}$, the deployment decision is simple: all \clp s will eventually reach a stable point at above human performance.
\vspace{-3mm}
\section{Conclusions}\label{sec:conclusions}
\vspace{-3mm}
We present a theoretical framework for human-ML collaboration in a dynamic setting where human labels can deviate from the indisputable ground truth. We introduce the \ccf, which theoretically links the utility of ML models with respect to the indisputable ground truth, to the utility of humans using those same ML models to support their decisions. The \ccf~ allows for multiple \clp s, depending on the utility of the initially deployed ML model. Each of the \clp s characterizes a possible ML deployment strategy and its ensuing dynamic learning process. We theoretically show conditions under which this dynamic system reaches a stable point through dynamic utility improvement or deterioration. We then present the empirical results of a large user study, which allows us to approximate \ccf s of the knapsack problem. For ML models of economic performance between 72\% and 92\%, our empirical approximations of \ccf s all lie above the 45-degree line. Any \clp~starting at utility between 72\% and 92\% will thus likely converge to a stable point with utility around 92\%. We explore two factors that can shift the \ccf. We find that monetary incentives do not seem to affect human performance. However, we find that wherever applicable, a simple post-processing step that picks the best among available solutions (as is possible for the knapsack problem) can substantially shift the \ccf~upward, leading to stable equilibria of higher utility. 

Our work has a number of limitations. On the theoretical side, our \clp s assume that the firm is able to perfectly replicate human+ML performance in future ML models.
Appendix \ref{A:imperfect-learners} discusses stability when learning does not exactly replicate previous human+ML performance. However, since this assumption will likely not hold in the real world, imperfect learning may require more iterations than perfect learning, so more empirical studies are required to explore the speed of model convergence. 
On the empirical side, to reduce costs while maintaining statistical power, we only randomized monetary incentives without ML recommendations, and we randomized the quality of ML recommendations while fixing monetary incentives. Studying the interaction of monetary incentives and ML performance is an important extension. The null result of  monetary incentives should be interpreted within our context. Specifically, the study participants received payments above minimum wage, and we only tested different levels of linear performance bonuses. It would be valuable to extend our work to evaluate the extent to which alternative base payments or non-linear bonuses may induce different levels of quality and effort by participants and thus \ccf s of varying shapes. 

Our approximation of $\Delta_{\mathbb{U}}$ for the knapsack problem is naturally incomplete for two main reasons. First, we use prediction models trained on synthetic data to approximate the \ccf. Second, we did not test every possible level of model performance to fully draw the \ccf. It is unlikely that these models and their linear interpolation would lead to the same performative trajectories as models trained on human feedback. We see this as a first proof of concept of \ccf s, but much more work is needed to estimate these functions in real-world settings.  %However, the six points of the curve that we empirically estimate make us fairly comfortable that a linear interpolation is reasonable, at least for model performances between our minimum and maximum. 

Future work could investigate the properties of $\Delta_{\mathbb{U}}$ that guarantee a unique optimal stable point, both theoretically and empirically. Provided that researchers have access to the indisputable ground truth, realistic empirical investigations of \ccf s are crucial to shed light on the shape of those functions for practically relevant tasks such as medical diagnoses or hiring decisions. Future work should also discuss fairness aspects of this framework, e.g., whether or not fair stable points exist and how a firm can reach them. More generally, we hope this work generates more interest in studying settings where ML deployments lead to changes in the data generating process, which have broad managerial and practical applications.
\bibliography{iclr2025_conference}
\bibliographystyle{iclr2025_conference}

\appendix
\section{Appendix}
\subsection{Data and Code}\label{a:data-code}
The code for model training, data generation, the web application for user study and our data analysis and plotting can be found in \textbf{ retracted for anonymity; all files are part of the submission as .zip file}
\subsection{The knapsack Problem}
\begin{definition} (0-1 knapsack Problem)
    We call $maximize \sum_{i=1}^n v_i x_i \text{ s.t. } \sum_{i=1}^n w_i x_i \leq W$
    $ \text{ with } 
    x_i \in \lbrace 0,1\rbrace, v_i, w_i,W \in \mathbb{R_{+}}$ the 0-1 knapsack Problem.
\end{definition}\label{def:knapsack-problem}%
\subsection{Payment Details}\label{A:payment-details}
We calculated the base payment assuming an average time of 19 minutes to complete the study. The base payment was adjusted upward if the median time to completion was longer than 19 minutes. We adjusted the payment despite the fact that many participants finished our survey but did not enter the completion code directly afterwards. This sometimes increased the median time to completion.
\subsection{Analysis with Optimality}\label{A:optimality}
\begin{definition}\textbf{(Optimality)}
    For a knapsack instance $X$ with optimal solution $Y^*$ and a valid solution $Y$ we call the function $\mathbb{U}_{\text{Opt}}(X,Y) = \begin{cases}
        1 & \text{if, } Y = Y^*, \\
        0 & \text{else }
    \end{cases}$ the optimality of $Y$ given $X$. Furthermore, we call $\underset{X}{\mathbb{E}}[\mathbb{U}_{\text{Opt}}(X,Y)]$, the optimal solution rate over all $X$.
\end{definition}\label{def:optimality-performance}
\begin{observation}
    Economic performance and Optimality are utility functions (\ref{def:utility}).
\end{observation}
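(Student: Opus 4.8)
The plan is to exhibit, for each of the two functions, a distance measure $d_X$ on $\mathcal{Y}$ together with a threshold $\varepsilon>0$ under which all three clauses of Definition~\ref{def:utility}---boundedness, $\varepsilon$-sensitivity, and the proximity condition---hold simultaneously. I would handle $\mathbb{U}_{\text{Econ}}$ and $\mathbb{U}_{\text{Opt}}$ separately, since (as I note at the end) no single distance serves both. Throughout I would exploit the fact that the knapsack values are integer-valued, so the achievable values $Y$ form a discrete subset of $[0,Y^*]$ whose distinct elements differ by at least $1$, and I would identify $\mathcal{Y}$ with these attainable values.

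For economic performance I would take the value-difference distance $d_X(Y,Y')=|Y-Y'|$ and set $\varepsilon=1$. Boundedness is immediate: the empty knapsack furnishes $Y_{min}$ of value $0$, so $\mathbb{U}_{\text{Econ}}(X,Y_{min})=0$ and $\mathbb{U}_{\text{Econ}}(X,Y^*)=1$, while any valid $Y$ satisfies $0\le Y\le Y^*$ and hence $\mathbb{U}_{\text{Econ}}(X,Y)=Y/Y^*\in[0,1]$. For $\varepsilon$-sensitivity, $Y,Y'\le Y^*$ gives $|d_X(Y,Y^*)-d_X(Y',Y^*)|=|Y-Y'|$, and integrality forces $|Y-Y'|<1\Rightarrow Y=Y'\Rightarrow\mathbb{U}_{\text{Econ}}(X,Y)=\mathbb{U}_{\text{Econ}}(X,Y')$. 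For proximity, $d_X(Y,Y^*)+\varepsilon<d_X(Y',Y^*)$ unwinds to $(Y^*-Y)+1<(Y^*-Y')$, i.e.\ $Y>Y'$, whence $Y/Y^*>Y'/Y^*$.

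For optimality the value-difference distance fails the proximity clause: for any threshold $\varepsilon$ one can take an instance whose achievable values are sufficiently spread out (e.g.\ $Y^*=100$, $Y=50$, $Y'=10$ whenever $\varepsilon<40$), so that $d_X(Y,Y^*)+\varepsilon<d_X(Y',Y^*)$ holds yet $\mathbb{U}_{\text{Opt}}(X,Y)=\mathbb{U}_{\text{Opt}}(X,Y')=0$. I would therefore switch to the discrete metric---$d_X(Y,Y')$ equal to $0$ when $Y=Y'$ and $1$ otherwise---with $\varepsilon=\tfrac12$. Boundedness holds because $\mathbb{U}_{\text{Opt}}\in\{0,1\}\subseteq[0,1]$, taking $Y_{min}$ to be any non-optimal solution. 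For $\varepsilon$-sensitivity, $|d_X(Y,Y^*)-d_X(Y',Y^*)|\in\{0,1\}$, so being below $\tfrac12$ forces the two indicators to agree, meaning $Y,Y'$ are either both optimal or both non-optimal, and in either case $\mathbb{U}_{\text{Opt}}(X,Y)=\mathbb{U}_{\text{Opt}}(X,Y')$. For proximity, $d_X(Y,Y^*)+\tfrac12<d_X(Y',Y^*)$ forces $d_X(Y,Y^*)=0$ and $d_X(Y',Y^*)=1$, i.e.\ $Y=Y^*$ and $Y'\neq Y^*$, so $\mathbb{U}_{\text{Opt}}(X,Y)=1>0=\mathbb{U}_{\text{Opt}}(X,Y')$.

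The main obstacle---and the only step that is not mechanical---is the choice of distance, because the $\varepsilon$-sensitivity and proximity clauses pull in opposite directions: sensitivity wants a distance coarse enough that nearby solutions share the same utility, while proximity wants it fine enough to strictly separate utilities. The two functions resolve this tension at different granularities, and neither distance is interchangeable: the discrete metric breaks sensitivity for $\mathbb{U}_{\text{Econ}}$ (two distinct non-optimal values would be wrongly forced to equal economic performance), whereas the value-difference metric breaks proximity for $\mathbb{U}_{\text{Opt}}$ as above. Hence the per-function choice of $d_X$ is essential rather than cosmetic, and establishing it is where the real content of the proof lies.
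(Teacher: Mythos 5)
Your proof is correct and follows essentially the same route as the paper's: choose a distance and a threshold $\varepsilon$ for each utility function and verify the three axioms of Definition~\ref{def:utility} directly (the paper implicitly uses the value difference with $\varepsilon$ equal to the minimum item value for economic performance, and the distance between indicator values, i.e.\ your discrete metric, for optimality). Your version is in fact the more careful one --- you make the distance measures explicit, justify $\varepsilon=1$ via integrality rather than via the minimum item value (which need not bound the gap between two distinct achievable totals, since swapping items can change the total by less than any single item's value), and show that neither metric can serve both functions --- but the underlying argument is the same.
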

\begin{proof}
    We start with the proof that Economic Performance is a utility function. 1) Economic performance is bounded between 0 (for an empty knapsack) and 1, for the optimal value of the knapsack. 2) There exists an $\varepsilon >0$, which is the minimum value of an item for the knapsack problem. The value of that item is the smallest possible distance between two solutions which are not equally good. 3) Because the $Y$ in our case is the sum of the values of the items in the knapsack and $Y*$ is the maximum possible value of the knapsack, any value that is closer to the optimal solution has also higher economic performance because the numerator grows. We chose $\varepsilon$ to be the minimum item value, thus this minimum increase in value between solutions is fulfilled. In summary, Economic Performance satisfies all three criteria of a utiltiy function.\\
    \\
    We continue with the proof that optimality is a utility function. 1) it is 0 or 1 and thus bounded. 2) If we choose $0<\varepsilon<1$, then $\varepsilon$-sensitivity is satisfied. 3) Is always true for the choice of our $\varepsilon$. Assume for example $\varepsilon=0.5$, then it is that $d(1,1)+ 0.5 < d(0,1)$ and $\mathbb{U}(1) > \mathbb{U}(0)$. This statement is true for all $0<\varepsilon<1$ which is what we specified for $\varepsilon$.
\end{proof}
Optimality is the function that indicates whether a solution to a knapsack problem has the optimal value or not. Figure \ref{fig:otpimality-ccf} shows the empirical \ccf{} for optimality as utility function. The humans achieve approximately $20\%$ optimalty without ML advice. The effect of human on human-ML performance is significant for all models ($p<0.001$). Interestingly, the effect is large even beyond human performance. Furthermore, for models q1,q2,3 with extremely low utility (average optimality of almost $0\%$), human effects on the overall outcome is large and close to human performance. As in Figure \ref{fig:empirical-collaborative-learning-curves}, the utility gain of rationally acting humans would have been larger for most models. Our observations suggest that stable points of optimality would lie above human performance without ML adivce.

\begin{figure*}%
     \begin{subfigure}[b]{0.42\textwidth}
         \centering
         \includegraphics[height = 1.12\textwidth]{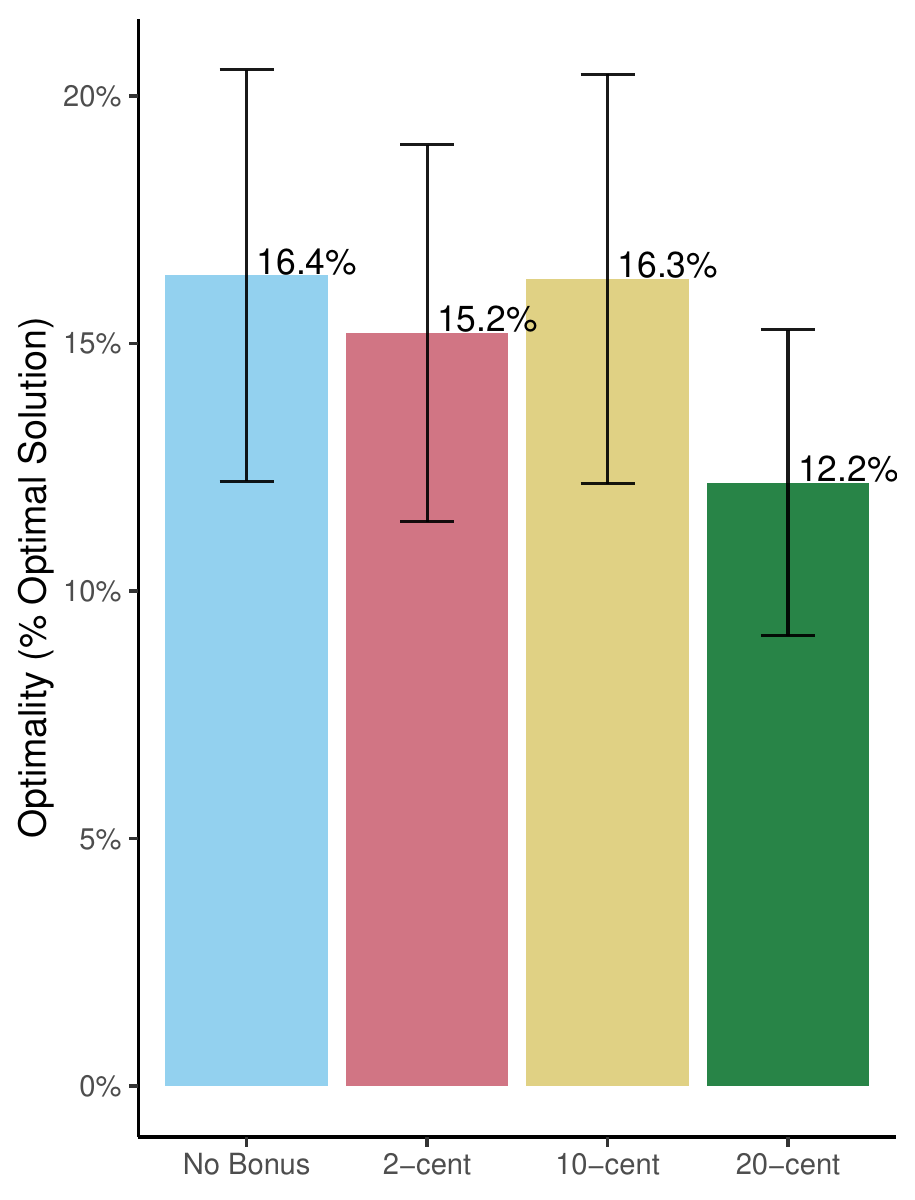}
         \centering
         \caption{Different bonus incentives.}
         \label{fig:incentives_barplot_opt}
     \end{subfigure}
     \hfill
     \begin{subfigure}[b]{0.6195\textwidth}
         \centering
         \includegraphics[height = 0.7504\textwidth]{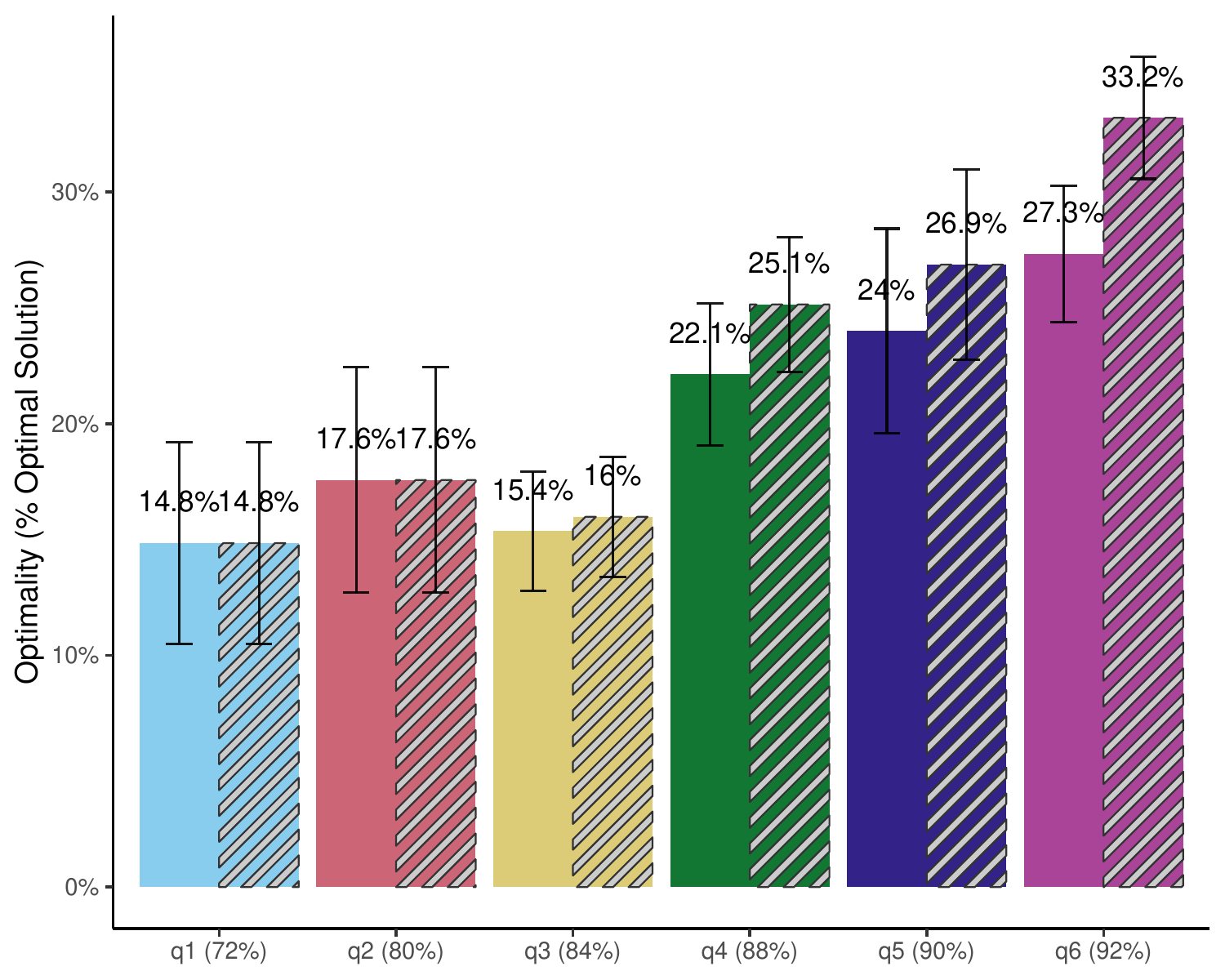}
         \caption{Different ML recommendations.}
         \label{fig:ml_barplot_opt}
     \end{subfigure}
     \hfill
        \caption{\footnotesize Optimality Across Treatments. Error bars denote 95\% confidence intervals based on standard errors clustered at the user level.
        Solid bars denote the average optimality of the submitted solution, striped bars denote the optimality if one picked the best solution between the submitted solution and the provided ML recommendation.}
        \label{fig:performance_barplot_opt}\vspace{-4mm}
\end{figure*}%

\begin{figure}
    \centering
    \includegraphics[width=.9\textwidth]{opt_curves.pdf}
    \caption{Empirical Collaborative Characteristic Function for the "Optimality" utility function. Confidence intervals are based on standard errors clustered at the participant level.}
    \label{fig:otpimality-ccf}
\end{figure}
\subsection{Comments on the Definition of Utility}\label{A:comments-utility}
We want to denote that $\varepsilon$-sensitivity implies the following:
\begin{observation}
    $\exists \epsilon, \varepsilon>0 : |d_X(Y,Y^{*}) - d_X(Y',Y^{*})| = \varepsilon \Rightarrow |\mathbb{U}(X,Y)-\mathbb{U}(X,Y')| = \epsilon$
\end{observation}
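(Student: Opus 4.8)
The plan is to extract the utility quantum $\epsilon$ from the two quantitative clauses of Definition~\ref{def:utility}, treating the sensitivity threshold $\varepsilon$ as the constant that those clauses already fix. First I would record that, for a fixed $X$, utility is a function of distance alone: applying $\varepsilon$-sensitivity to two solutions with $|d_X(Y,Y^*)-d_X(Y',Y^*)|=0<\varepsilon$ forces $\mathbb{U}(X,Y)=\mathbb{U}(X,Y')$, so there is a well-defined $f_X$ on the set of achievable distances $D_X=\{d_X(Y,Y^*):Y\in\mathcal{Y}\}$ with $\mathbb{U}(X,Y)=f_X(d_X(Y,Y^*))$. The proximity clause then says $f_X$ is strictly decreasing as soon as its argument grows by more than $\varepsilon$, so that a larger distance yields strictly smaller utility once the gap exceeds $\varepsilon$.

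Next I would produce the constant $\epsilon$ as the smallest positive value attained by $|\mathbb{U}(X,Y)-\mathbb{U}(X,Y')|$. Boundedness (clause~1) caps the range of $f_X$, and in the settings of interest $D_X$ is discrete (for integer-valued knapsack instances it is finite), so the utility values form a finite bounded set whose positive pairwise gaps have a strictly positive minimum; I define $\epsilon$ to be this minimum. The threshold $\varepsilon$ and the quantum $\epsilon$ are then matched by a sandwiching argument: $\varepsilon$-sensitivity rules out any utility change for distance differences below $\varepsilon$, while the proximity clause guarantees a strict change above $\varepsilon$, so $\varepsilon$ is exactly the smallest distance difference capable of moving the utility and the change it triggers is the minimal quantum $\epsilon$.

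The step I expect to be the main obstacle is that Definition~\ref{def:utility} is literally silent at the boundary: neither $\varepsilon$-sensitivity (which requires a strict inequality $<\varepsilon$) nor the proximity clause (which requires a strict inequality $>\varepsilon$) constrains a pair with $|d_X(Y,Y^*)-d_X(Y',Y^*)|=\varepsilon$, and in full generality $f_X$ could be a decreasing step function with jumps of differing sizes, so the clean equality need not hold uniformly over all such pairs. I would close this gap by appealing to the structure of the utilities actually used in the paper: for economic performance, choosing $d_X$ to be the value gap $Y^*-Y$ makes $f_X$ affine, $\mathbb{U}_{\text{Econ}}(X,Y)=1-d_X(Y,Y^*)/Y^*$, so a distance difference of exactly $\varepsilon$ (the minimum item value) produces the constant utility difference $\epsilon=\varepsilon/Y^*$ independent of the base point; for optimality the only nonzero gap is the single jump from $0$ to $1$. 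Thus the equality in the observation holds verbatim for both utility functions of the paper, and in the general case it should be read with $\epsilon$ interpreted as the smallest attainable utility change, exactly as the constant $\epsilon$ is used in the main text.
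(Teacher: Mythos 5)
The paper gives no proof of this observation at all---it is asserted as a bare consequence of $\varepsilon$-sensitivity, followed only by the gloss ``there is a minimum utility change that we call $\epsilon$.'' So there is no official argument to match yours against; what you have produced is a careful reconstruction of what the observation \emph{could} mean, and your central diagnosis is correct: the statement, read with the $\varepsilon$ of Definition~\ref{def:utility}, does not follow from that definition. A concrete counterexample makes this sharp: take achievable distances $\{0,1,2\}$ with $f_X(0)=1$, $f_X(1)=0.9$, $f_X(2)=0.5$ and $\varepsilon=1$. All three clauses of Definition~\ref{def:utility} hold (distances closer than $1$ coincide; the proximity clause only constrains the pair $(0,2)$), yet pairs at distance-difference exactly $\varepsilon$ have utility gaps $0.1$ and $0.4$, so no single $\epsilon$ exists. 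Your observation that the definition is silent at the boundary $|d_X(Y,Y^*)-d_X(Y',Y^*)|=\varepsilon$ (both clauses are strict) and that jump sizes can differ is exactly the failure mode, and your repair---verifying the equality verbatim for $\mathbb{U}_{\text{Econ}}$ (affine in the value gap) and $\mathbb{U}_{\text{Opt}}$ (a single jump), and otherwise reading $\epsilon$ as the smallest attainable utility change, which is how the main text uses it ($|\delta_t|\leq\epsilon$ for stability)---is the right charitable interpretation.

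Two refinements. First, you do not actually need to assume discreteness of $D_X$ to get your minimal gap: clause~1 of Definition~\ref{def:utility} requires the minimum utility to be \emph{attained} at some $Y_{min}$, and the proximity clause then forces every distance carrying a distinct utility value to lie within $d_X(Y_{min},Y^*)+\varepsilon$; since distances with distinct utilities must be pairwise $\geq\varepsilon$ apart (contrapositive of $\varepsilon$-sensitivity), only finitely many utility values exist for each fixed $X$, so the minimal positive gap $\epsilon_X$ exists in full generality. What does \emph{not} follow is uniformity across instances---your own computation $\epsilon=\varepsilon/Y^*$ for economic performance shows the quantum varies with $X$, so the observation can only be read per-instance (consistent with the paper's own choice of $\varepsilon$ as the minimum item value, which also varies per instance). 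Second, the literal $\exists\,\epsilon,\varepsilon$ quantification is even weaker than your ``main obstacle'' paragraph suggests: choosing $\varepsilon$ larger than any realized distance difference makes the implication vacuously true, so the statement as written is technically satisfiable but contentless; this confirms that the only substantive reading is yours, namely $\epsilon$ defined as the minimal utility increment associated with the $\varepsilon$ of Definition~\ref{def:utility}, with literal equality holding only for specially structured utilities such as the paper's two.
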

This means that there is a minimum utility change that we call $\epsilon$.
\subsection{Proof of Propositon 1\&2}\label{A:proofs-props}
\paragraph{Proposition 1}(Collaborative Improvement)\\
    \textit{If $\Delta_{\mathbb{U}}(\mathbb{U}(X,Y_M))\geq\mathbb{U}(X,Y_M)$ for all $M\in \mathcal{M}, X\in\mathcal{X}$. Then $\mathbb{L}_{\Delta_{\mathbb{U}}}(s,t)$, is non-decreasing with $t=1,...,T$ and for sufficiently large $T$ it exists a $t'\in [1,T]$ such that $\mathbb{L}_{\Delta_{\mathbb{U}}}(s,t')$ is a stable point.}
\begin{proof}
    Let $t\in 1,...,T$ be the number of deployment (epochs) that a firm will make. The firm perfectly learns the data distribution in every epoch, in other words, we assume that $L(Y_{M_t}, Y_{H_{t-1}} = 0, \forall t$. Furthermore, it is $\Delta_{\mathbb{U}}(\mathbb{U}(X,Y_M))\geq\mathbb{U}(X,Y_M)$ for all $M\in \mathcal{M}, X\in\mathcal{X}$.

    We first show that $\mathbb{L}_{\Delta_{\mathbb{U}}}(s,t)$ is non-decreasing with $t$. For that, assume that there exists $t$ for which $\mathbb{L}_{\Delta_{\mathbb{U}}}(s,t)>\mathbb{L}_{\Delta_{\mathbb{U}}}(s,t+1)$. But $\mathbb{L}_{\Delta_{\mathbb{U}}}(s,t+1) = \underset{X\in \mathcal{X}}{\mathbb{E}}(\mathbb{U}(H(X,Y_{M_{t+1}}))) \geq^{\delta_i \geq 0} \underset{X\in \mathcal{X}}{\mathbb{E}}(\mathbb{U}(Y_{M_{t+1}})) =^{L(Y_{M_{t+1}},Y_{H_t})=0} \underset{X\in \mathcal{X}}{\mathbb{E}}(\mathbb{U}(H(X,Y_{M_{t}}))) = \mathbb{L}_{\Delta_{\mathbb{U}}}(s,t)$. It follows that $\mathbb{L}_{\Delta_{\mathbb{U}}}(s,t)$ must be non-decreasing.

    Now we show that there exists a $t'\in [1,T]$ such that $\mathbb{L}_{\Delta_{\mathbb{U}}}(s,t')$ is a stable point for sufficiently large $T$. For this, consider that $\mathbb{U}$ has a maximum $\mathbb{U}(Y^{*})$ (Property 1 (bounded) of definition \ref{def:utility}) and there exists a minimum increment of utility $\epsilon$ (see \ref{A:comments-utility}) in each deployment. If we do not achieve at least $\epsilon$ increment in utility, we have reached a stable point. Thus, we can write the maximum utility as $\mathbb{U}(Y^*) = \mathbb{U}(Y_{M_t}) + N\epsilon$. For sufficiently large ($T\geq N+1)$, this implies that we reached maximum utility with $\mathbb{L}_{\Delta_{\mathbb{U}}}(s,T)$, and every deployment beyond that must have equal utility.
\end{proof}
\paragraph{Proposition 2}(Collaborative Harm)\\
    \textit{If $\Delta_{\mathbb{U}}(\mathbb{U}(X,Y_M))\leq\mathbb{U}(X,Y_M)$ for all $M\in \mathcal{M}, X\in\mathcal{X}$. Then $\mathbb{L}_{\Delta_{\mathbb{U}}}(s,t)$, is non-increasing with $t=1,...,T$ and for sufficiently large $T$ it exists a $t'\in [1,T]$ such that $\mathbb{L}_{\Delta_{\mathbb{U}}}(s,t')$ is a stable point.}
\begin{proof}
    Analogous to the proof of Proposition 1.
\end{proof}
\subsection{Perfect vs Imperfect learning}\label{A:imperfect-learners} 
In this section, we discuss what changes if we loosen the assumption $L(Y_{M_t},Y_{H_{t-1}})=0$. We call this assumption the "perfect learner" assumption because the firm perfectly learns the human labels from epoch $t-1$ with a model in epoch $t$. In the following, we consider an imperfect learner such that $L(Y_{M_t},Y_{H_{t-1}})=\sigma$. 

Figure \ref{fig:theory} helps illustrate the relaxation of the assumption. An imperfect learner effectively amounts to tilting the 45-degree line upward or downward. The tilt is upward if imperfect learning leads the ML model to have lower performance with respect to the indisputable ground truth compared to the human (i.e., the slope of the straight line is higher than 1). The tilt is downward if imperfect learning leads the ML model to have higher performance with respect to the ground truth (i.e., the slope of the straight line is lower than 1).

It is straightforward to extend Proposition \ref{lemma:collab-imp} and Proposition \ref{lemma:collab-harm} to the case of imperfect learning. In the case of collaborative improvement ($\Delta_{\mathbb{U}}(\mathbb{U}(X,Y_M))\geq\mathbb{U}(X,Y_M)$), the human will improve on any model that the firm can deploy. However, if imperfect learning leads to $(\mathbb{U}(Y_{M_t}) - \mathbb{U}(Y_{M_{t-1}}))<0$ then the performance gain from the human effort does not fully transfer to the ML model. If the above statement is true for all $M_t$, then the imperfection creates collaborative harm, which is the case covered in Proposition \ref{lemma:collab-harm}. However, this would still lead to a stable point. The alternative scenario where $L(Y_{M_t},Y_{H_{t-1}})=\sigma \Rightarrow (\mathbb{U}(Y_{M_t}) - \mathbb{U}(Y_{M_{t-1}}))>0$ for all $M_t$, is still a scenario of collaborative improvement, which means that we will again reach a stable point. 

In summary, imperfect and perfect learners are analogous. In both cases, the crucial question is how much humans improve the system's performance. For the case of an imperfect learner, an additional empirical question is how much of the human improvement transfers to the ML model.
\subsection{Model Training}\label{A:model-training}
We release the code required for training our models, our model parameters and all predictions for the instances together with the instances that participants saw.
\begin{figure}
    \centering
    \includegraphics[width=0.7\textwidth, height=0.35\textwidth]{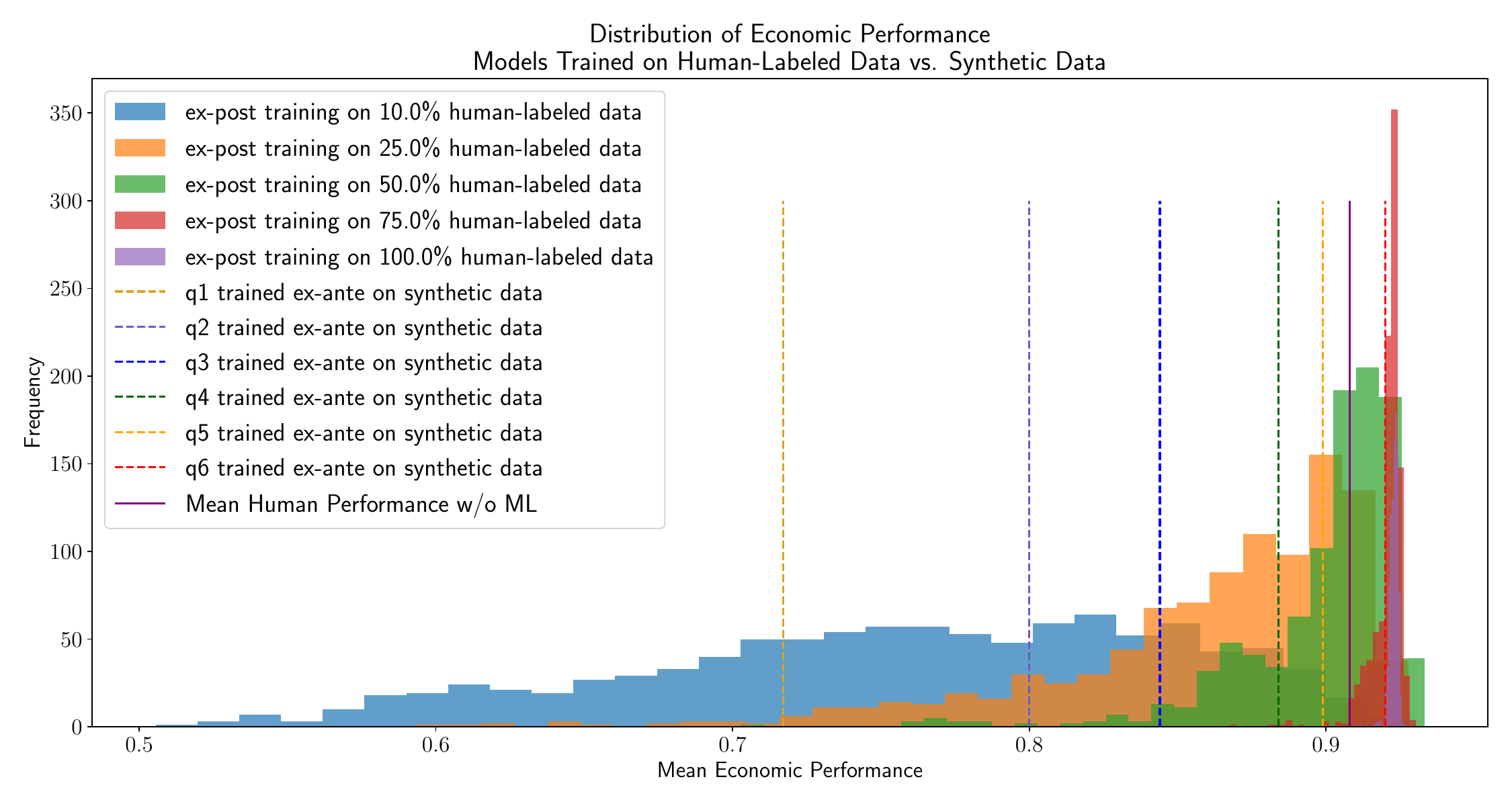}
    \caption{\footnotesize Distribution of mean model performances trained on human data ex-post to verify that we picked models with reasonable performances. Vertical lines indicate the economic performances of models trained on synthetic data, which were chosen to approximate the \ccf of the  task.}
    \label{fig:ex-post-verify}\vspace{-5mm}
\end{figure}
Learning to solve the knapsack problem is a research area for itself, however for the small, one-dimensional case of our experiment, it is possible on consumer hardware. We only train models for knapsack instances with 18 items. As input features we concatenate weights $w_1,...,w_{18}$, values $v_1,...,v_{18}$, the weight constraint $W$, the sum of the weights and the sum of the values. Thus, our input dimension is 39. Our goal was to train models with a broad spectrum of economic performances, not to solve the knapsack problem perfectly. We added 5 fully connected layers, 4 of them with ReLU activation functions. We use torch.Sigmoid() for our outputs. The output dimension was 18 and the output values in each index can be interpreted as the likelihood that the item belongs to a solution or not. For more details on the architecture, see our code. In summary, all models had dimensions in order of layers: (39,90), (90,550), (550,90), (90,84), (84,18).

We want to highlight two important aspects of how we thought about the model training. First, did not want to use any prior knowledge that a firm in our setting could not have either. For example, if we could have known the utility of a knapsack solution (economic performance or optimality) we could have just directly maximized it, or if we could know the optimal solution, we could have just used the distance to the optimal solution as our loss. Instead, we used the binary cross-entropy between the label and prediction as our loss. The label was a 18-dimensional 0-1 vector. If the i-th entry of this output vector is 1, it means that the i-th item is in the knapsack and otherwise not. Thus we simply minimized the differences between chosen items in our training data and those of our model. For us, this was a reasonable analogy for the application context of healthcare in which every "item" is a diagnosis or a symptom (e.g. an ICD10 code).

Because our financial budget was limited and we wanted to test multiple models, we trained all models on optimally solved knapsack instances. It would have also created a lot of overhead and space for errors if we would have collected the data of model q1 then trained q2 and rerun the user study. Training them all on generated labels made it possible to run more treatments at once. We still wanted to use ML models instead of solutions produced with dynamic programming, because we wanted to incorporate the distributional character of ML predictions (see Figure \ref{fig:model-dists}) and study the reaction to different quantiles of solution quality in greater detail in future work.

However, we had to include two pieces of prior knowledge in order to achieve better model performance (especially for q5 and q6). First, we sorted the items by density (value/weight). This is a big advantage in general, but only a small one for our knapsack instances because weights and values are strongly correlated. Second, we normalized weights and values in a pre-processing step. In our setting, both operations could not have been done by the firm (what is a normalized symptom)? However, with those minor modifications we were able to create a larger range of models without massive resources and still just immitate the "human" label without incorporating anything in the loss. In a post-processing step, we sorted the items by sigmoid outputs. We then added items to the knapsack until the weight constraint was reached. From that item selection, we calculated the actual knapsack values. For more details, please visit our github repository \textbf{To be added after acceptance}.
\subsection{Overview statistics}\label{A:overview-stats}
Figure \ref{fig:overview-stats} shows the overview of the answer to the demographic questions in the end of our study. Most participants held an Undergraduate degree, were between 25 and 44 years old and have not heard about the knapsack problem before completing the study. $50.1\%$ of the participants identified as female $48.6\%$ as male and $0.8\%$ as non-binary or non-gender conforming. $96.8\%$ of the participants have not heard about the knapsack problem before this study. Figure \ref{fig:difficulty-effort} shows the perceived difficulty of the task for the participants, as well as the reported effort the participants put to complete the task. Most participants perceived the task as neutral to hard and put in large to very large effort (self-reportedly). Figure \ref{fig:help-ml-effort} shows how much effort people think they would have spent with or without the help of ML. It seems like participants who had no ML help think they would put less effort in the task. People who had the help of ML reported to put about as much effort as all participants reported to put in right now. Future work should investigate these perceptions in detail.

\begin{figure}
    \centering
    \scalebox{0.3}{\includegraphics{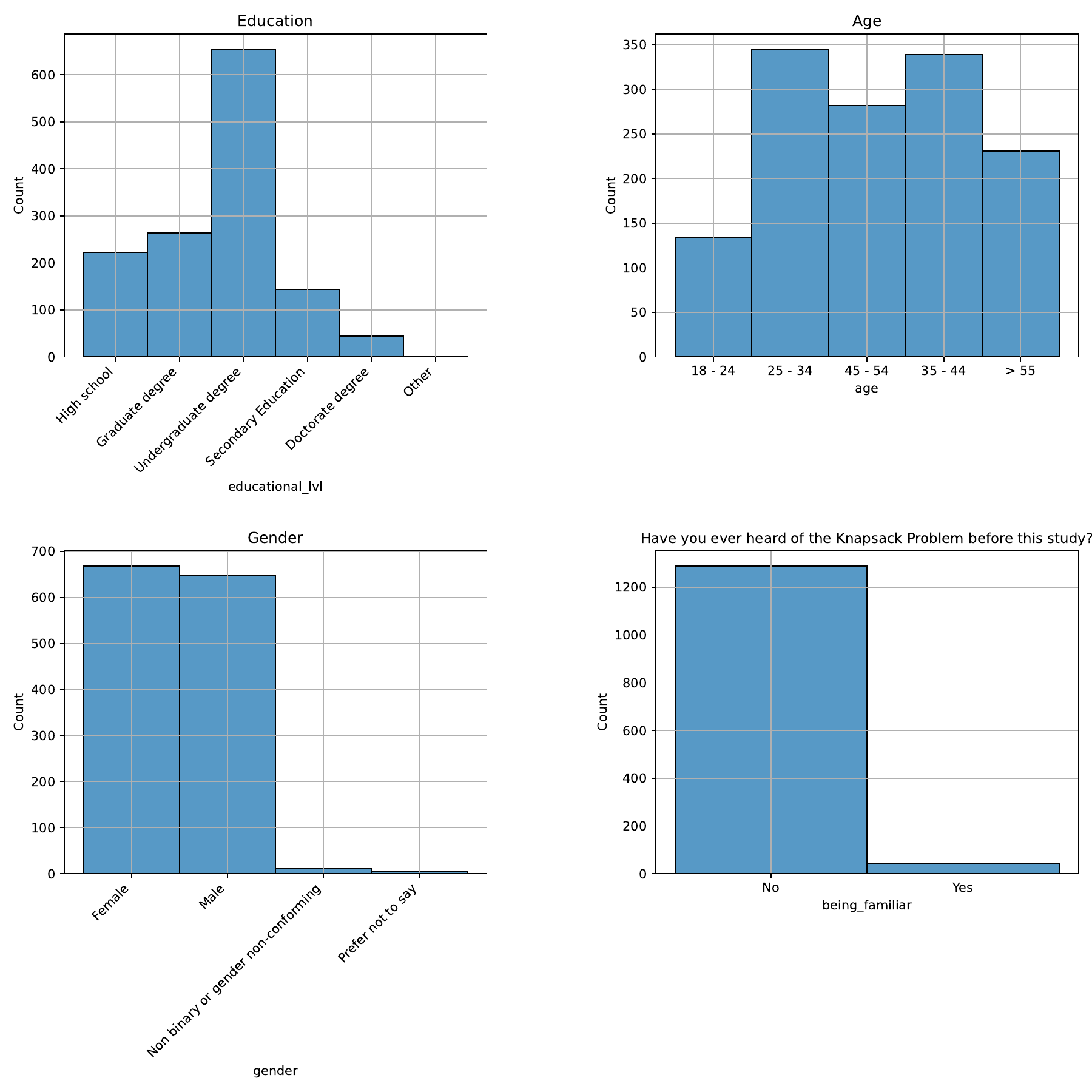}}
    \caption{Highest level of education completed, age group, gender and whether participants have heard of the knapsack problem before this study.}
    \label{fig:overview-stats}
\end{figure}
\begin{figure}
    \centering
    \scalebox{0.3}{\includegraphics{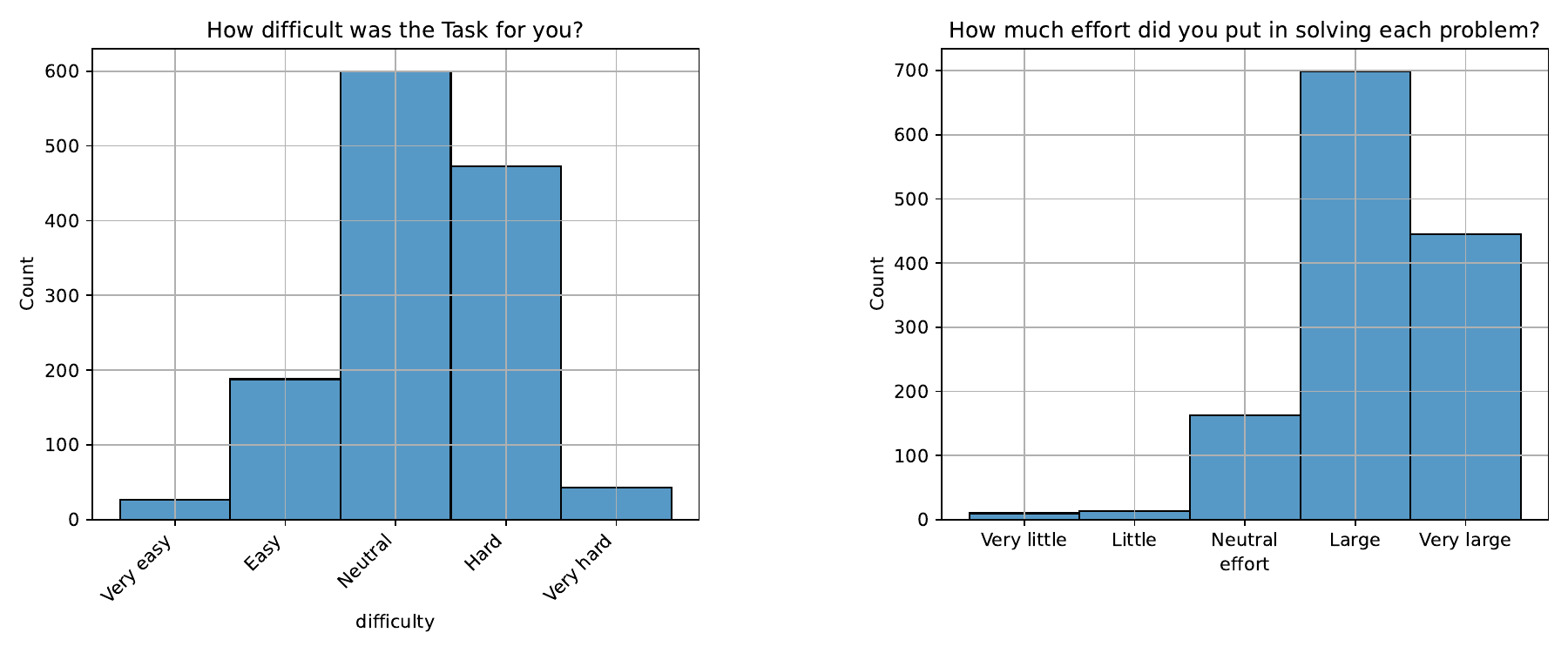}}
    \caption{Perceived difficulty of the task versus the reported level of effort participants reported in our study}
    \label{fig:difficulty-effort}
\end{figure}
\begin{figure}
    \centering
    \scalebox{0.3}{\includegraphics{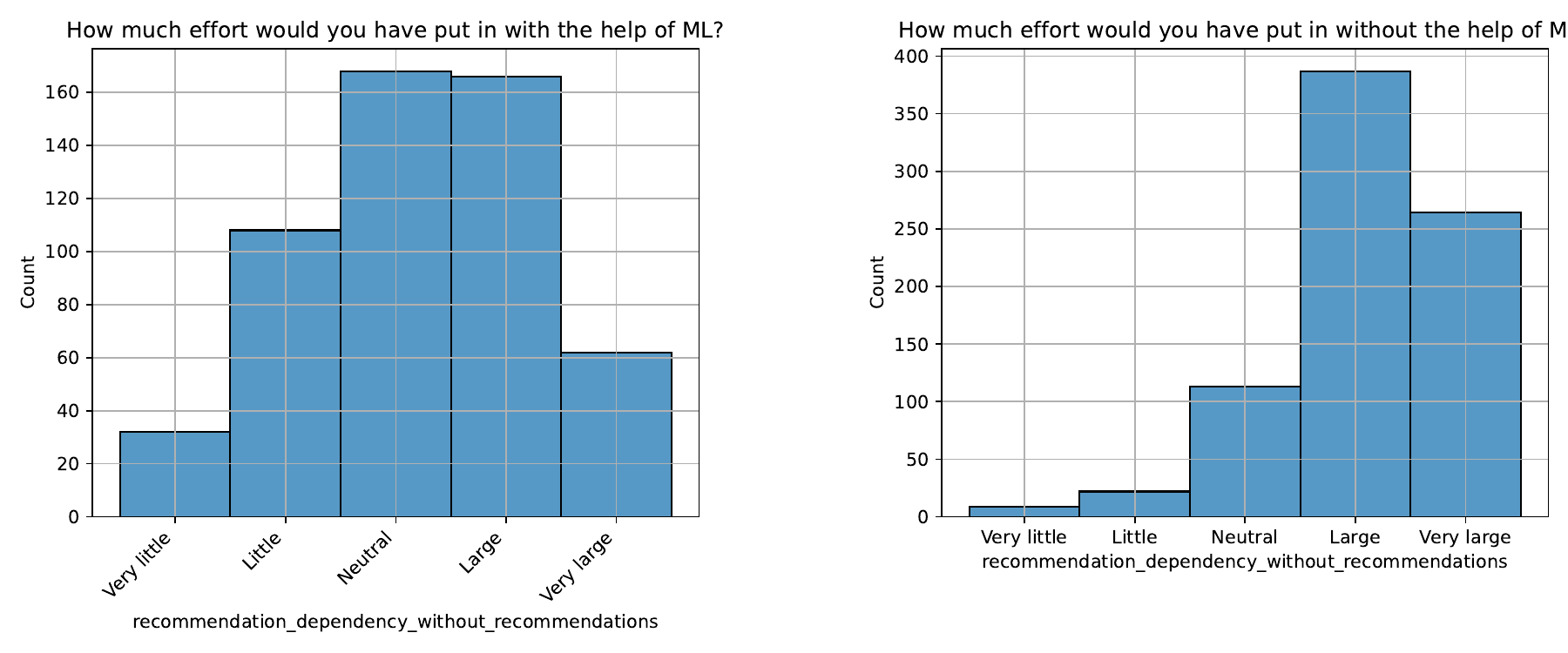}}
    \caption{How much effort participants thought that they would have spent with/without the help of ML}
    \label{fig:help-ml-effort}
\end{figure}
\begin{figure}[h!]
    \centering
    \subfloat[Perceived difficulty vs effort without ML]{\includegraphics[width=0.45\textwidth]{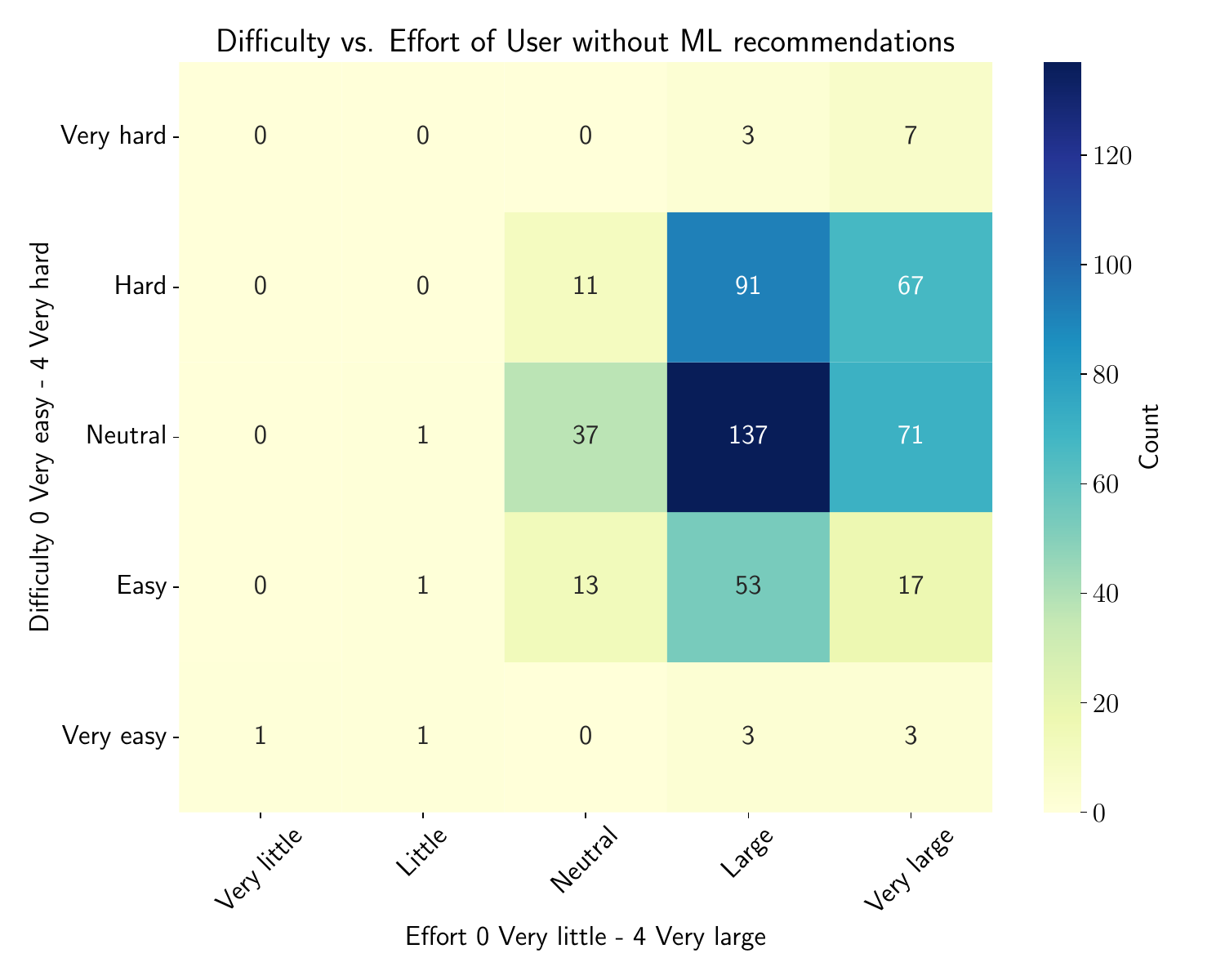}\label{fig:plot1}}
    \hfill
    \subfloat[Perceived difficulty vs effort with ML]{\includegraphics[width=0.45\textwidth]{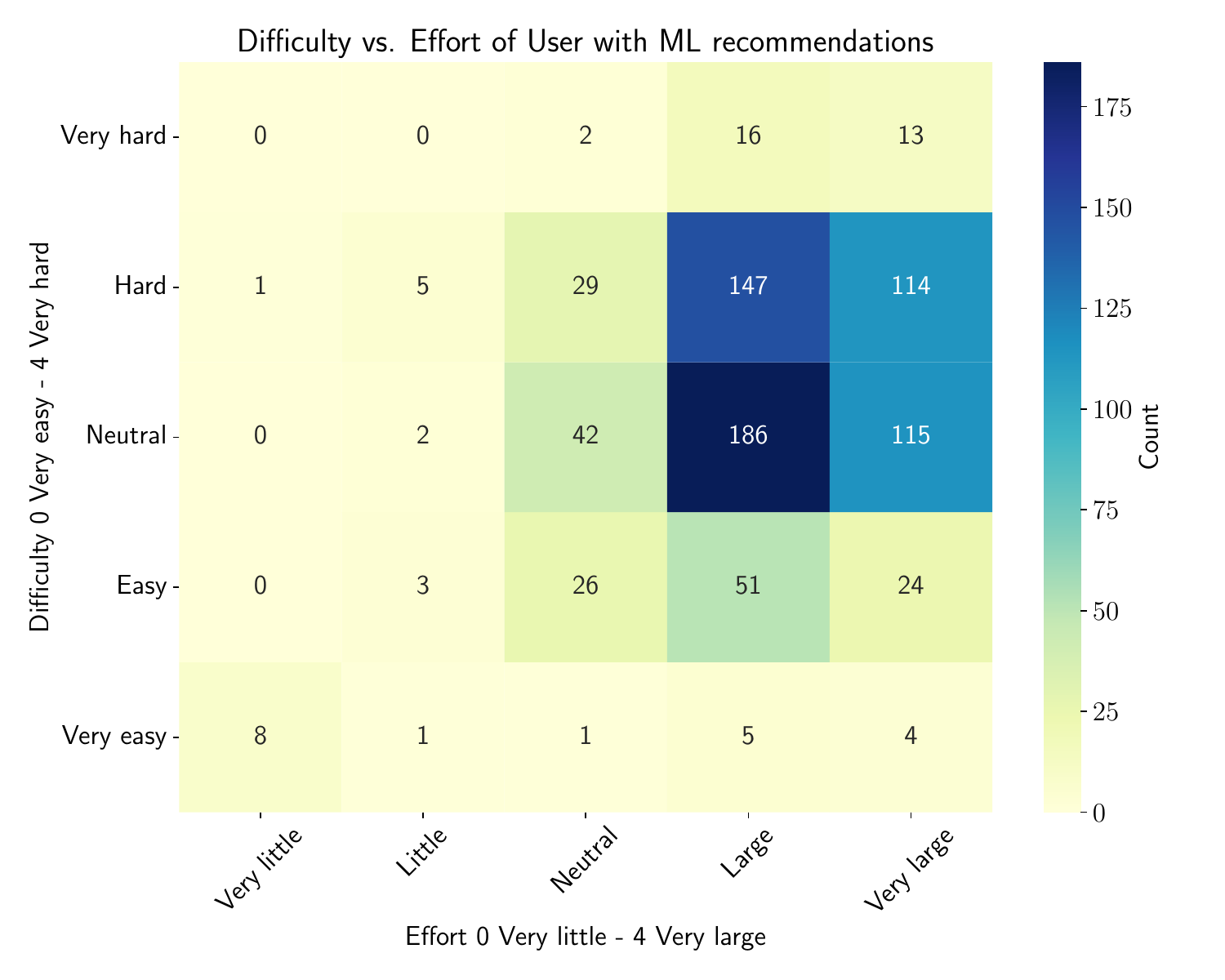}\label{fig:plot2}}
    \\
    \subfloat[Effort vs hypothetical effort with ML]{\includegraphics[width=0.45\textwidth]{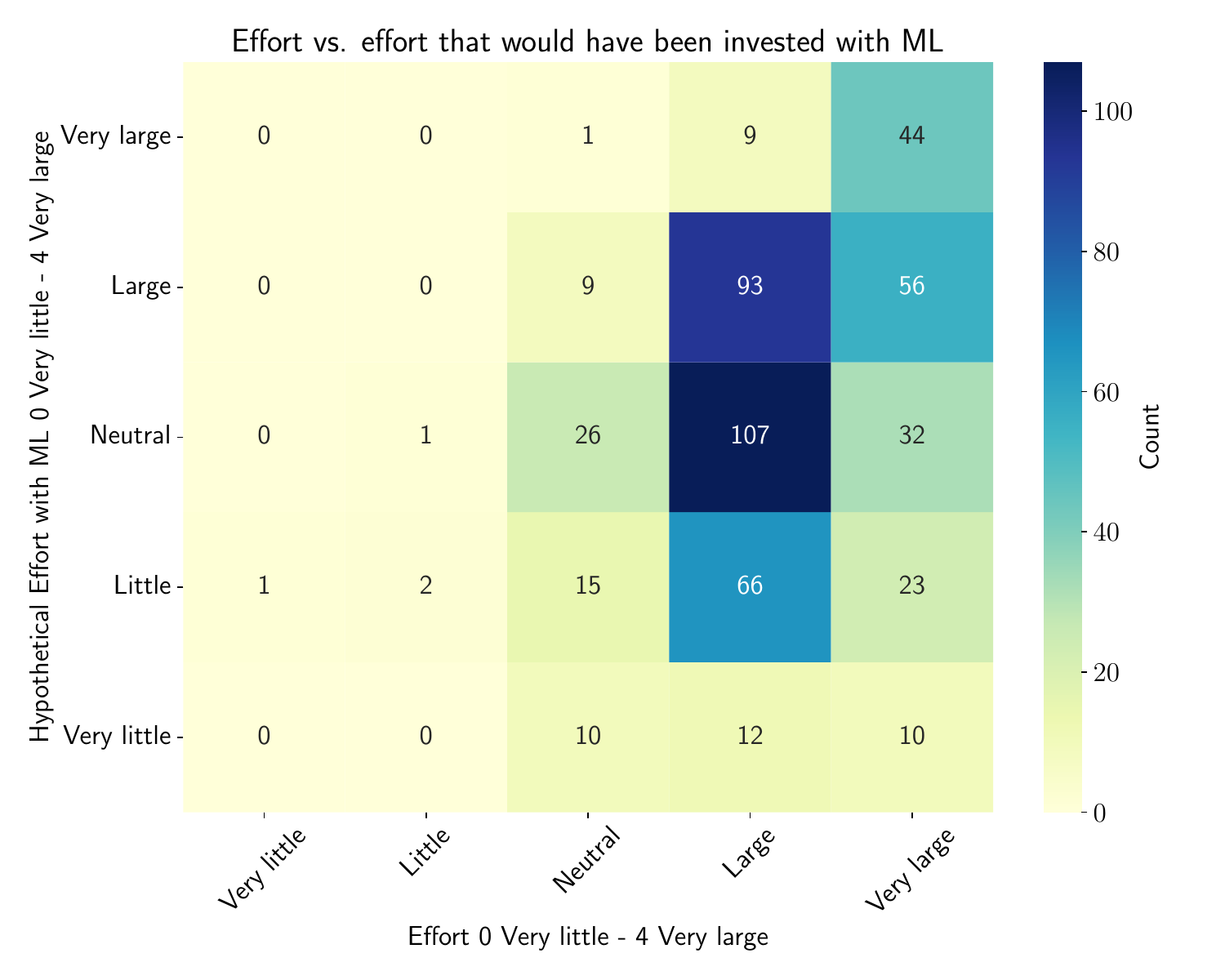}\label{fig:plot3}}
    \hfill
    \subfloat[Effort vs hypothetical effort without ML]{\includegraphics[width=0.45\textwidth]{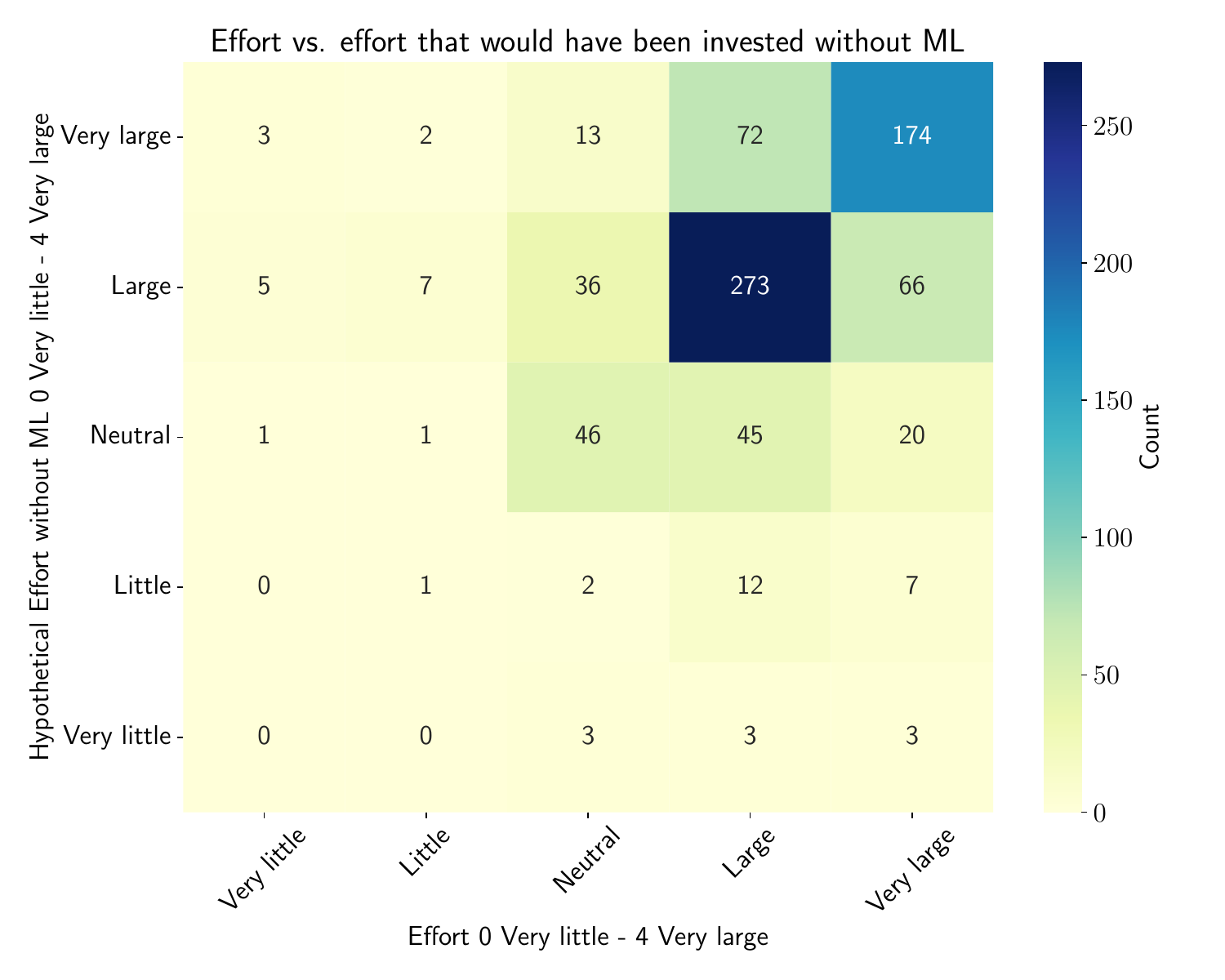}\label{fig:plot4}}
    \caption{\footnotesize
    Task Difficulty, Effort, and Hypothetical Effort with and without ML recommendations. Perceived difficulty of the task does not vary between participants with and without ML recommendations. Participants without ML recommendation expect to invest less effort if they would have had a ML recommendation, while participants with ML recommendations expected to invest the same effort without ML.}
    \label{fig:reported_effort}
\end{figure}
\subsection{Generating Hard knapsack Problems}\label{A:knapsack}
%\subsubsection{Generating knapsack Instances}
knapsack problems where the weights $w_i$ and values $v_i$ are strongly, yet imperfectly, correlated \citep{pisinger2005hard, murawski2016humans} tend to be hard to solve. We generate knapsack instances with strong correlations ($r\in[0.89, 1.00]$, mean $r=.9814$) using Algorithm \ref{algo:generate_hard_knapsack}, following the criteria for difficult problems outlined by \cite{pisinger2005hard}. In our experiment, users solve knapsack instances with $n=18$ items, $W_{min} = 5$, $W_{max} = 250$. We constrain the weights, values, and capacity of our instances to integer values, to make them easier to interpret by humans.
\begin{algorithm}
\caption{Generate hard knapsack instance}\label{algo:generate_hard_knapsack}
\begin{algorithmic}
\Require number of items $n \geq 0$, knapsack capacity range $W_{min},W_{max} >0$
\State $W \gets random.uniform.integer(W_{min},W_{max})$
\State $w \gets random.uniform.integer(1,W,n)$ \Comment{n-dimensional vector of weights}
\State $i \gets 1$
\While{$i \leq n$}
    \State $v_i \gets max(1,random.uniform.integer(w_i - \lfloor\frac{W}{10}\rfloor, w_i + \lfloor\frac{W}{10}\rfloor)$
\EndWhile
\end{algorithmic}
\end{algorithm}
\subsection{Survey Design}\label{A:survey-design}
\begin{figure}[ht]
    \centering
    \scalebox{0.6}{
    \includegraphics{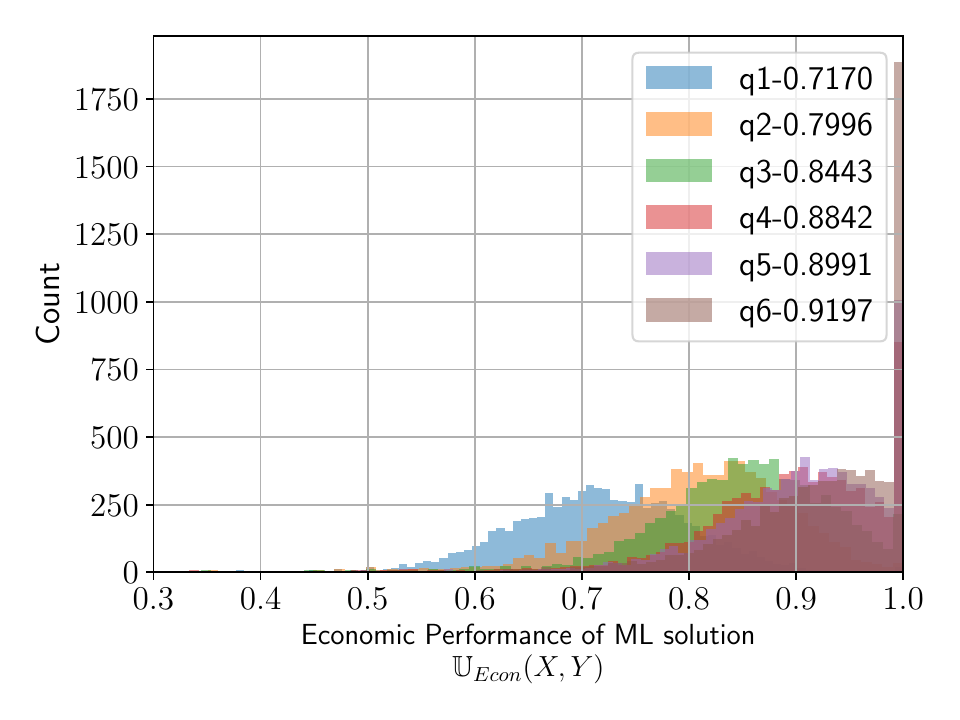}
    }
    \caption{Distribution of economic performances of solutions by the six models we deployed in our experiment.}
    \label{fig:model-dists}
\end{figure}
\begin{figure}
    \centering
    \includegraphics[width=\textwidth]{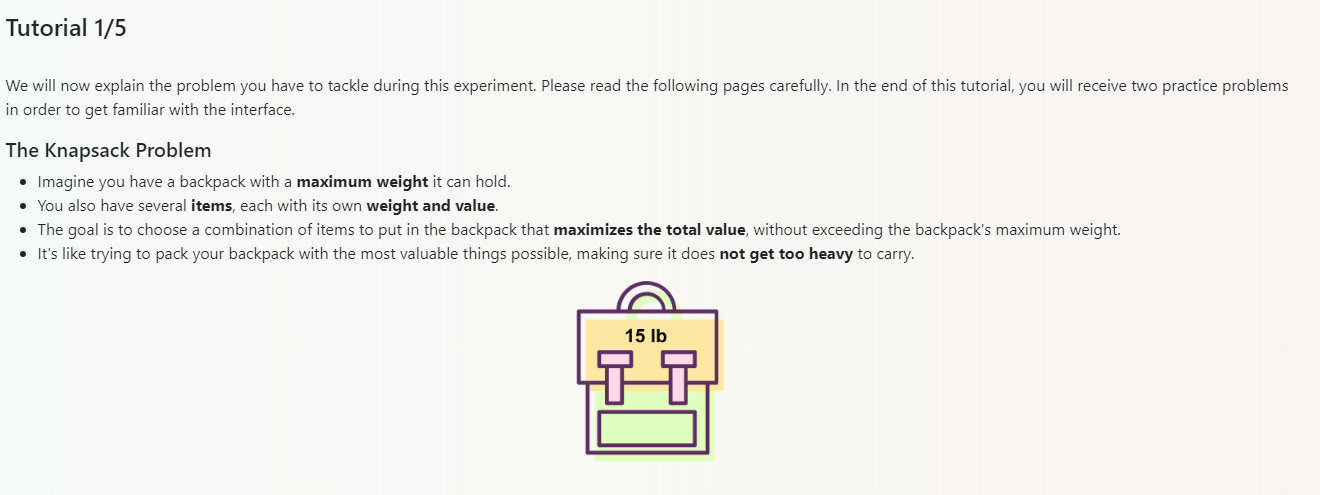}
    \caption{Tutorial 1/5}
    \label{fig:tutorial-1}
\end{figure}
\begin{figure}
    \centering
    \includegraphics[width=\textwidth]{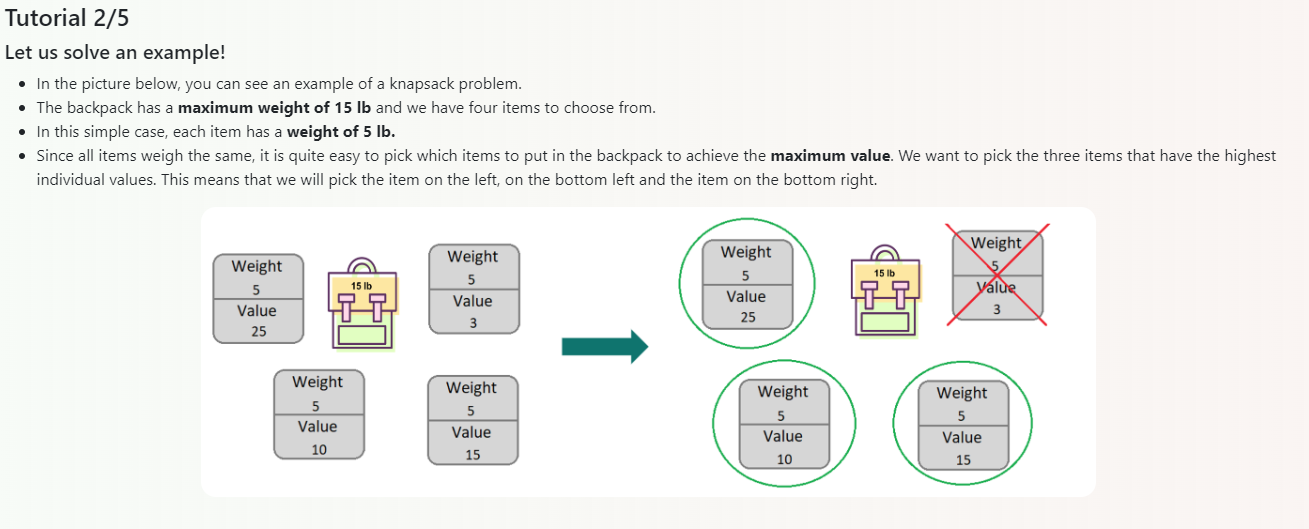}
    \caption{Tutorial 2/5}
    \label{fig:tutorial-2}
\end{figure}
\begin{figure}
    \centering
    \includegraphics[width=\textwidth]{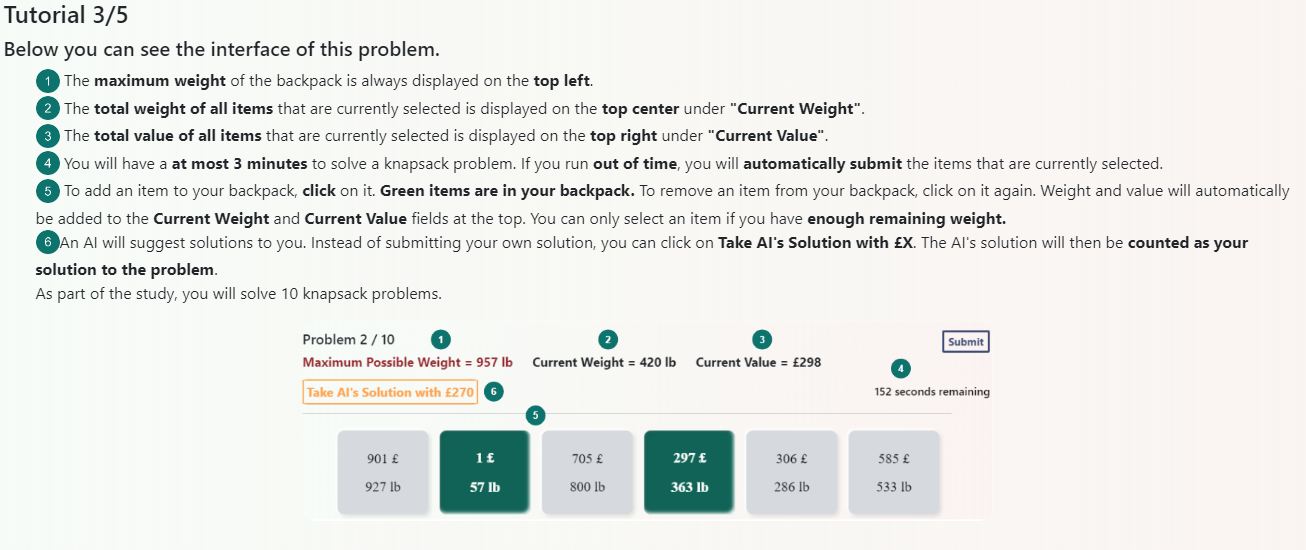}
    \caption{Tutorial 3/5 (with ML treatment)}
    \label{fig:tutorial-3}
\end{figure}
\begin{figure}
    \centering
    \includegraphics[width=\textwidth]{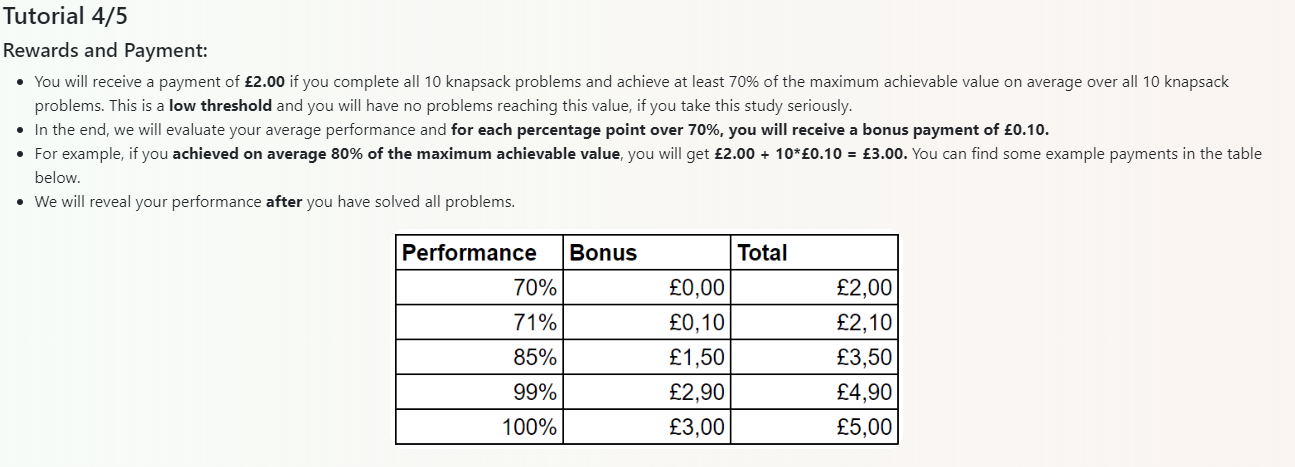}
    \caption{Tutorial 4/5 (with 10 cents/ppt monetary incentive)}
    \label{fig:tutorial-4}
\end{figure}
\begin{figure}
    \centering
    \includegraphics[width=\textwidth]{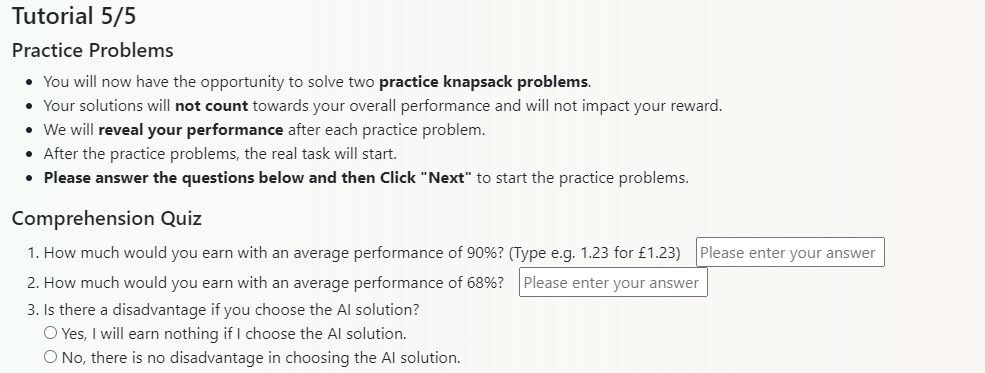}
    \caption{Tutorial 5/5 (with comprehension quiz)}
    \label{fig:tutorial-5}
\end{figure}
\begin{figure}
    \centering
    \includegraphics[width=\textwidth]{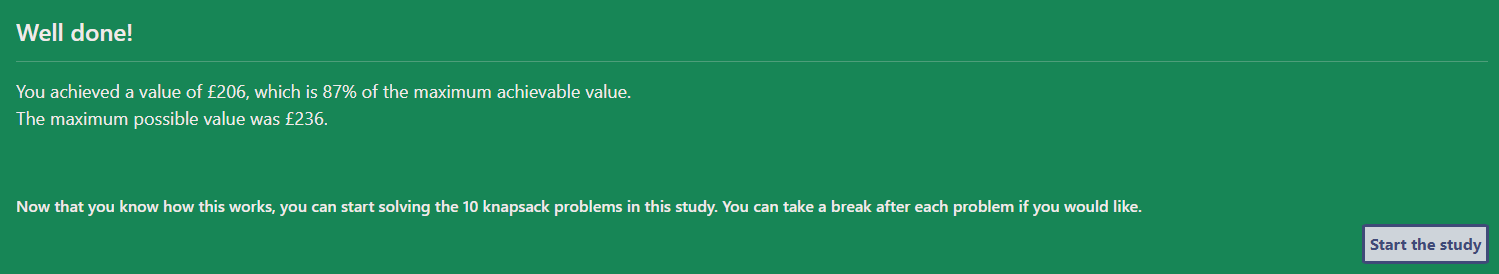}
    \caption{Feedback to a practice problem}
    \label{fig:practice-problem-feedback}
\end{figure}
\begin{figure}
    \centering
    \includegraphics[width=12cm]{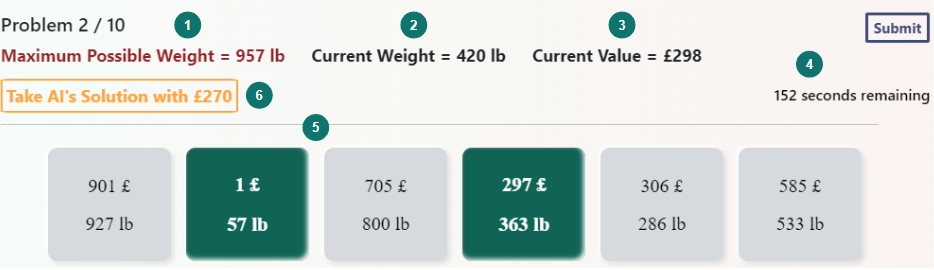}
    \caption{Interface for the main task: \textbf{1)} the knapsack capacity, \textbf{2)} sum of weights of selected items, \textbf{3)} sum of values of selected items, \textbf{4)} remaining time, \textbf{5)} items with weights and values, \textbf{6)} machine learning solution (only visible if user receives corresponding treatment). Clicking on gray items adds them to the knapsack if the weight allows it, and clicking on green items removes them from the knapsack. The total weight and value of selected items is shown at the top and automatically updated. 
}
    \label{fig:ui}
\end{figure}
%\begin{figure}
%    \centering
%    \includegraphics[width=\textwidth]{}
%    \caption{Interface of the task (with ML treatment)}
%    \label{fig:task-interface}
%\end{figure}
\begin{figure}
    \centering
    \includegraphics[width=\textwidth]{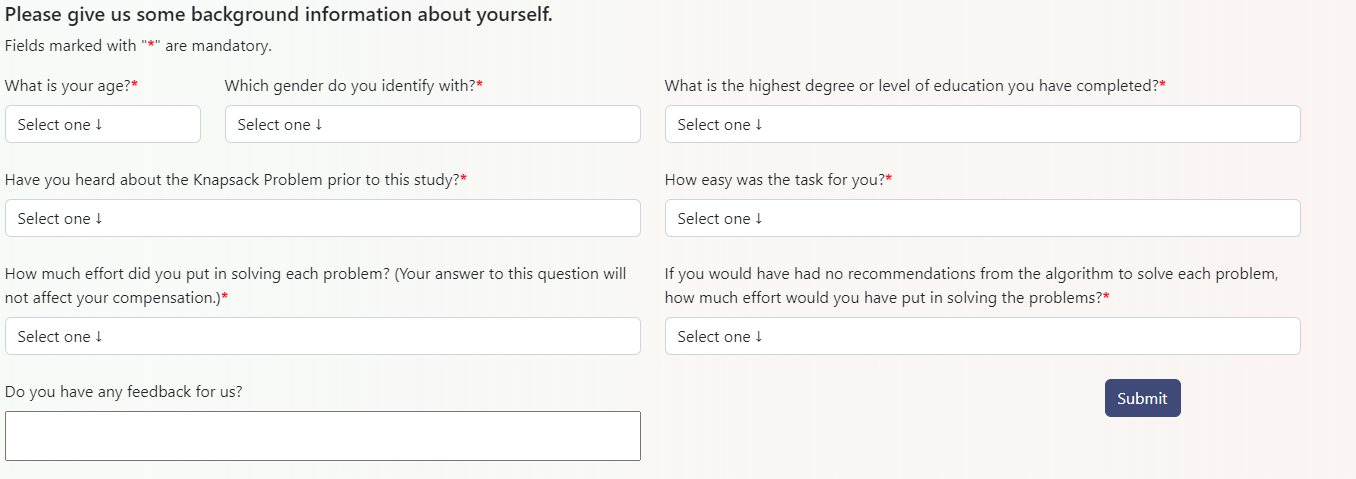}
    \caption{Demographic questions after tasks}
    \label{fig:demographic-questions}
\end{figure}
\begin{figure}
    \centering
    \includegraphics[width=\textwidth]{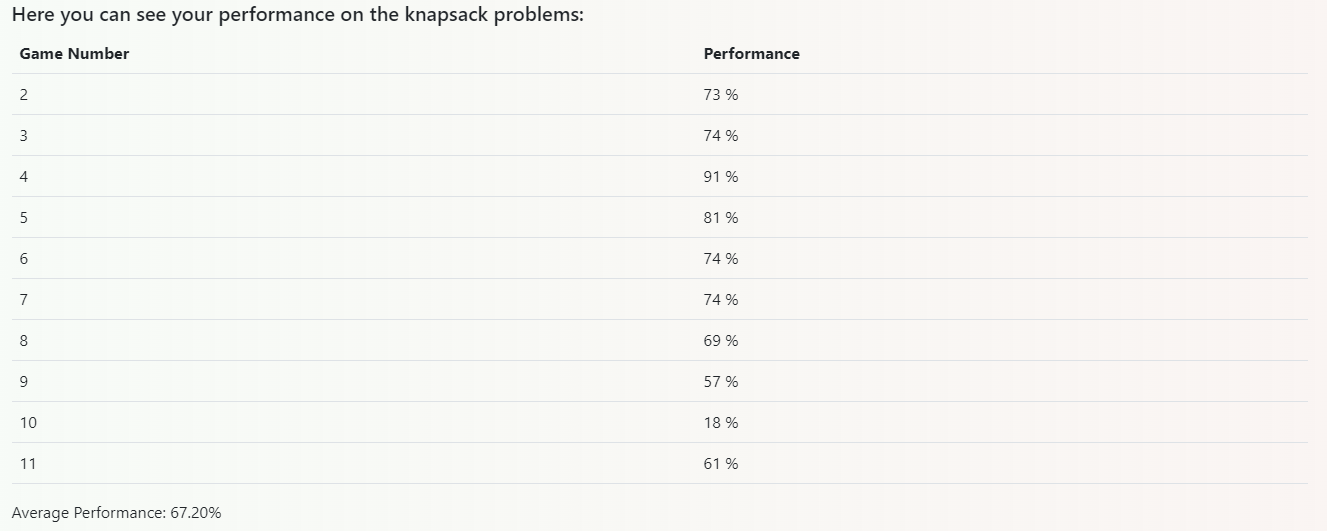}
    \caption{Score screen for performance feedback in the end}
    \label{fig:score-screen}
\end{figure}
\subsection{Futher Statistical insights}
\begin{table*}[t]
\centering
\scalebox{0.65}{
\begin{minipage}[b]{0.6\textwidth} % Adjusts the width of the second table
\centering
\begin{tabular}{lll}
\toprule
& $\mathbb{U}_{\text{Econ}}(X,H(X))$ & $\mathbb{U}_{\text{Econ}}(X,H(X))$ \\
\midrule
Intercept & $0.7620^{***}$ & $0.6957^{***}$ \\
          & $(0.0184)$  & $(0.0408)$  \\
02-cent bonus & $0.0003$ & $0.1087^{*}$ \\
          & $(0.0077)$  & $(0.0484)$  \\
10-Cent bonus & $0.0011$  & $0.0683$ \\
          & $(0.0076)$  & $(0.0499)$  \\
20-Cent bonus & $-0.0087$  & $0.0787$ \\
          & $(0.0079)$  & $(0.0478)$ \\
log(seconds spent) & $0.0322^{***}$  & $0.0481^{***}$ \\
          & $(0.0039)$  & $(0.0093)$  \\
02-cent bonus $\cdot$ log(seconds spent) & ---  & $-0.0260^{*}$ \\
          &   & $(0.0112)$  \\
10-cent bonus $\cdot$ log(seconds spent) & ---  & $-0.0161$ \\
          &   & $(0.0115)$  \\
20-cent bonus $\cdot$ log(seconds spent) & ---  & $-0.0210$ \\
          &   & $(0.0113)$  \\
\midrule
\textit{N} & 3,960 & 3,960 \\
Adj.R$^2$     & 0.0613  & 0.0661  \\
\bottomrule
\end{tabular}
\end{minipage}
}\hfill
\hspace{19mm}
\scalebox{0.65}{
\begin{minipage}[b]{0.6\textwidth} % Adjusts the width of the first table
\centering
\begin{tabular}{lll}
\toprule
& $\mathbb{U}_{\text{Econ}}(X,H(X,Y))$ & $\mathbb{U}_{\text{Opt}}(X,H(X,Y))$ \\
\midrule
Intercept & $0.8082^{***}$ &  $-0.0297$\\
          & $(0.0049)$  &  $(0.0245)$ \\
q1 ($72\%$) & $-0.0131$ &  $-0.0408$ \\
          & $(0.0044)^{**}$  &  $(0.0260)$ \\
q2 ($80\%$) & $0.0011$  & $-0.0161$ \\
          & $(0.0043)$  &  $(0.0283)$ \\
q3 ($84\%$)& $0.0048$  & $-0.0324$ \\
          & $(0.0049)$  & $(0.0198)$ \\
q4 ($88\%$)& $0.0151^{***}$  & $0.0333$ \\
          & $(0.0033)$  &  $(0.0214)$ \\
q5  ($90\%$)& $0.0247^{***}$  & $0.0518^{*}$ \\
          &  $(0.0043)$ &  $(0.0263)$ \\
q6  ($92\%$)& $0.0211^{***}$  & $0.0880^{***}$ \\
          & $(0.0033)$  &  $(0.0213)$ \\
log(seconds spent)& $0.0240^{***}$  & $0.0533^{***}$ \\
          & $(0.0010)$  &  $(0.0045)$ \\
\midrule
\textit{N} & 8,930 & 8,930 \\
Adj.R$^2$     & 0.0733  & 0.0281  \\
\bottomrule
\end{tabular}
\end{minipage} % Fill space between the two minipages
}\caption{Linear regressions with clustered standard errors on participant id. Effect of monetary incentive on $\mathbb{U}_{\text{Econ}}$ of human solutions \textbf{(left)}. Effect of ML recommendation on different levels of economic performance $\mathbb{U}_{\text{Econ}}$ \textbf{(right)}. Standard errors in parentheses. * $p<0.05$, ** $p<0.01$, *** $p<0.001$.}\label{tbl:ml-effect}
\label{tbl:monetary-incentive}
\end{table*}

\begin{figure}[H]
     \centering
     \captionsetup{justification=centering}
     \begin{subfigure}[b]{0.4\textwidth}
         \centering
         \includegraphics[height = \textwidth]{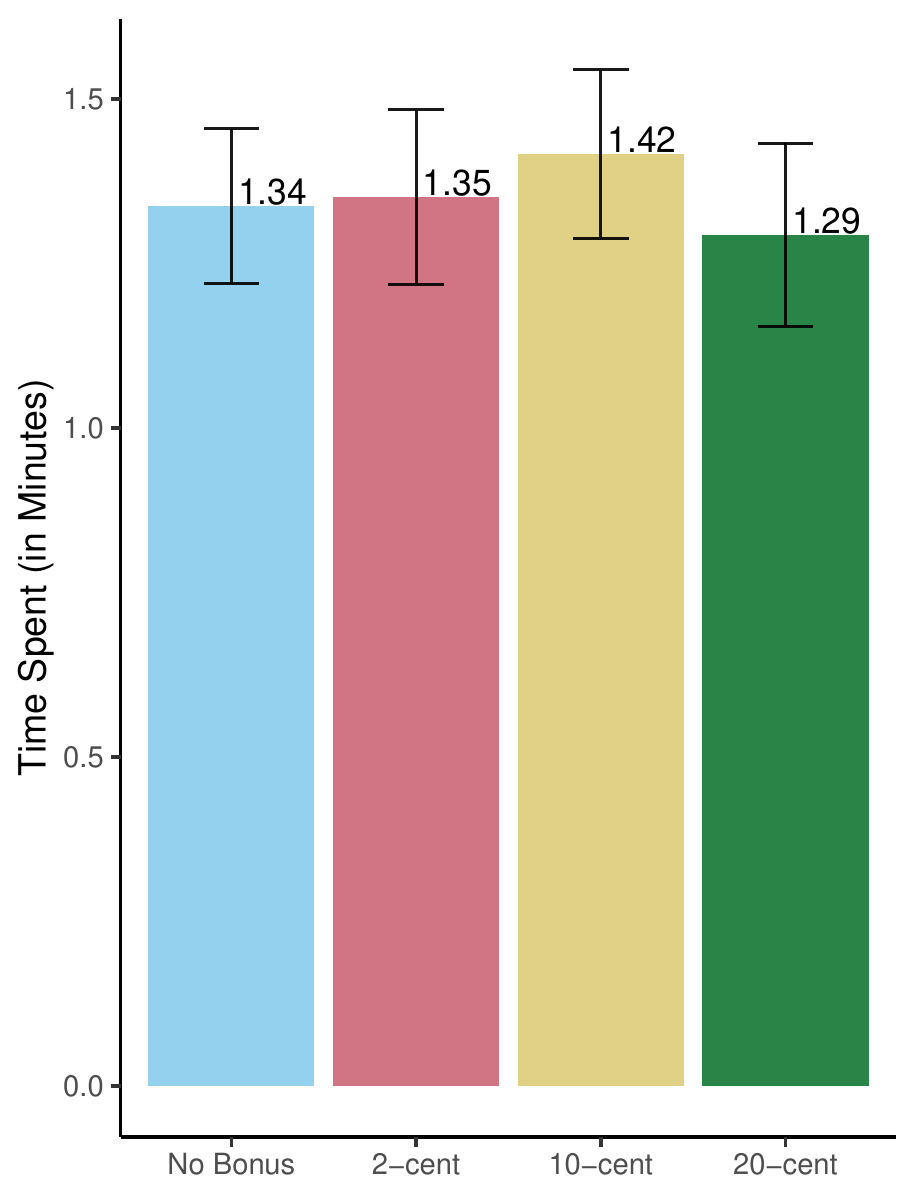}
         \centering
         \caption{Different bonus incentives.}
         \label{fig:incentives_timebarplot}
     \end{subfigure}
     \hfill
     \begin{subfigure}[b]{0.59\textwidth}
         \centering
         \includegraphics[height = 0.67\textwidth]{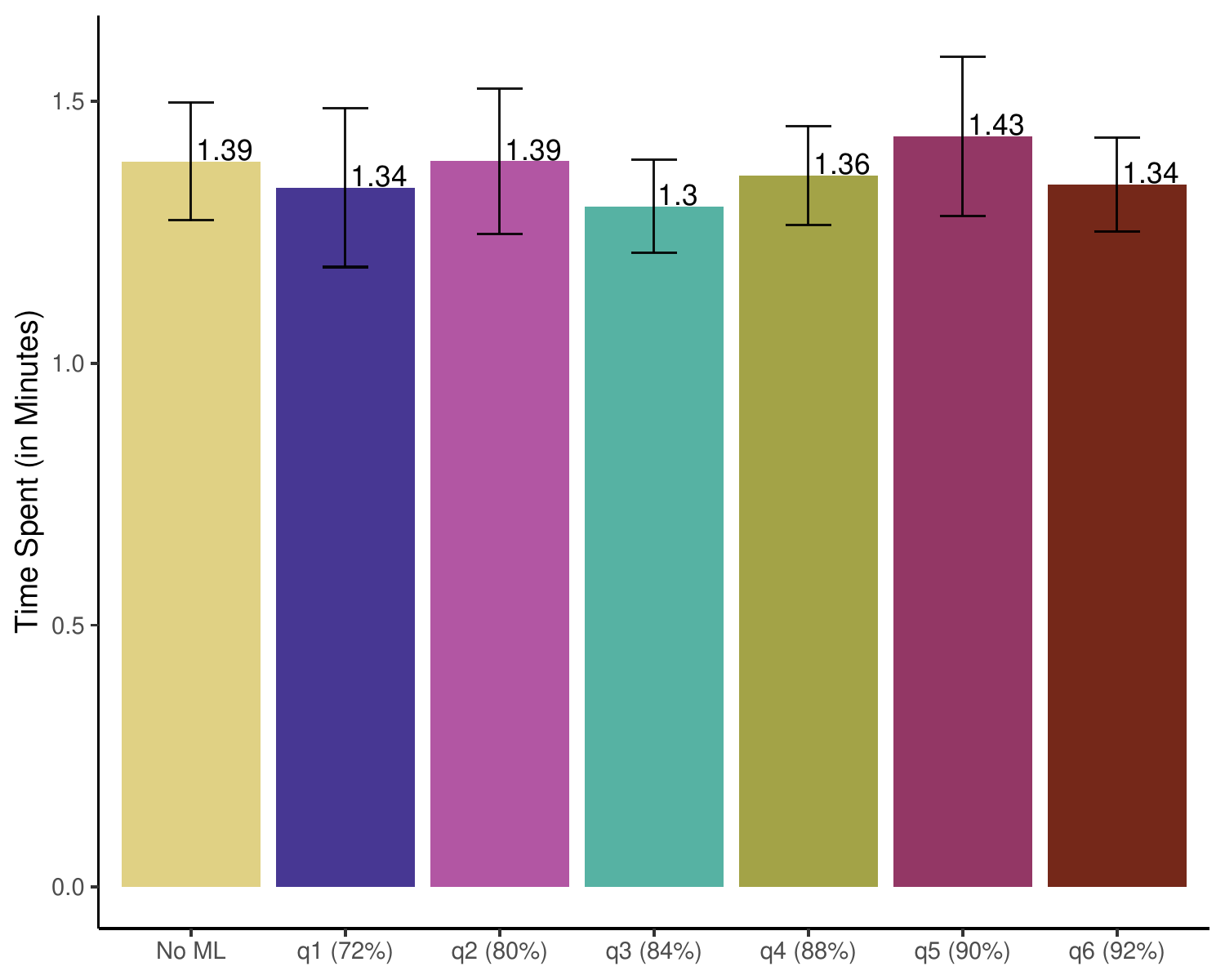}
         \caption{Different ML recommendations.}
         \label{fig:ml_timebarplot}
     \end{subfigure}
     \hfill
        \caption{Time Spent Across Treatment Conditions}
        \label{fig:performance_timebarplot}
\end{figure}%

\begin{table}[h!]
\centering
\scalebox{0.6}{
\begin{tabular}{lccc}
\toprule
& \multicolumn{3}{c}{$\mathbb{U}_{\text{Econ}}(X,H(X))$} \\ 
& (1) & (2) & (3) \\
\midrule
Intercept & $0.7620^{***}$ & $0.7676^{***}$ & $0.8082^{***}$ \\
          & $(0.0184)$  &  $(0.0276)$ & $(0.0049)$  \\
2-cent Bonus & $0.0003$ &  \\
          & $(0.0077)$  &   \\
10-cent Bonus & $0.0011$  &  \\
          & $(0.0076)$  &   \\
20-cent Bonus & $-0.0087$  &  \\
          & $(0.0079)$  &  \\
Comprehension Quiz &   &  $0.0104$ & \\
          &   & $(0.0076)$  \\
q1 ($72\%$) & && $-0.0131$  \\
          & && $(0.0044)^{**}$   \\
q2 ($80\%$) & &&$0.0011$  \\
          & &&$(0.0043)$   \\
q3 ($84\%$)& &&$0.0048$  \\
          & &&$(0.0049)$  \\
q4 ($88\%$)& &&$0.0151^{***}$  \\
          & &&$(0.0033)$   \\
q5  ($90\%$)& &&$0.0247^{***}$  \\
          &  &&$(0.0043)$   \\
q6  ($92\%$)& &&$0.0211^{***}$  \\
          & &&$(0.0033)$   \\
log(seconds spent) & $0.0322^{***}$  & $0.0317^{***}$ & $0.0240^{***}$\\
& $(0.0039)$  & $0.0064$ & $(0.0010)$ \\
\midrule
\textit{N} & 3,960 & 2170 & 8,930 \\
Adj.R$^2$  & 0.0613  & 0.0506 &  0.00733\\
Included Bonus Treatments & All & 10-cent & 10-cent\\
Included ML Treatments &  No ML & No ML & All ML\\
Comprehension Quiz & No & Both & Yes \\
\bottomrule
\end{tabular}}
\caption{Linear regressions of economic performance $\mathbb{U}_{\text{Econ}}$ on dummies for the various treatment conditions. Column 1 includes all treatment conditions without ML recommendations and without comprehension quiz. It tests the difference in performance across different bonus levels. Column 2 includes the two treatment conditions without ML recommendation and with 10-cent bonus. The difference between the two treatment conditions is the presence of a comprehension quiz for the bonus structure. Column 3 includes all treatments with comprehension quiz and 10-cent bonus. It tests the difference in performance across ML recommendations with different performance. Standard errors, in parentheses, are clustered at the participant level. * $p<0.05$, ** $p<0.01$, *** $p<0.001$.}\label{tab:main_regs}
\end{table}

\begin{figure}[H]
     \centering
     \captionsetup{justification=centering}
     \begin{subfigure}[b]{0.45\textwidth}
         \centering
         \includegraphics[width=\textwidth]{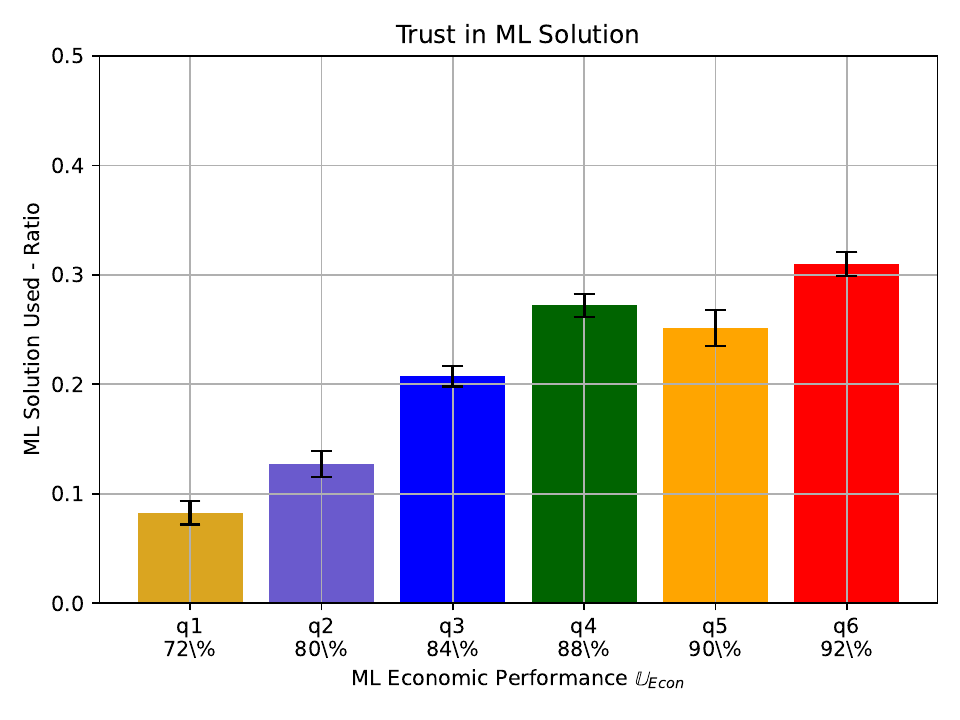}
         \centering
         \caption{Rate of ML advice usage increased with better performance.}
         \label{fig:trust-in-ai}
     \end{subfigure}
     \hfill
     \begin{subfigure}[b]{0.45\textwidth}
         \centering
         \includegraphics[width=\textwidth]{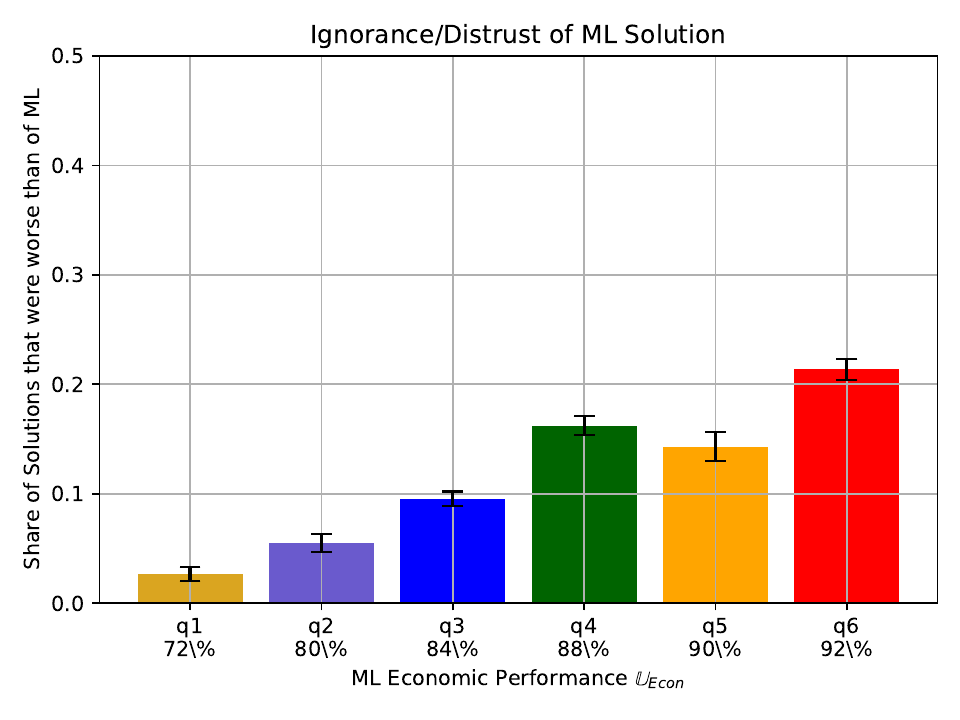}
         \caption{Participants ignore the ML recommendation with better performance.}
         \label{fig:ignorance-share}
     \end{subfigure}
     \hfill
        \caption{ML-usage increased with better ML performance. Share of ignored ML solutions did also increase with better performance.}
        \label{fig:trust-ignorance}
\end{figure}%
\end{document}